\documentclass[twoside]{article}

\usepackage[preprint]{aistats2027}

\usepackage[round]{natbib}

\usepackage[utf8]{inputenc}
\usepackage[T1]{fontenc}
\usepackage{microtype}

\usepackage{amsmath}
\usepackage{amssymb}
\usepackage{amsfonts}
\usepackage{mathtools}
\usepackage{amsthm}
\usepackage{bm}
\usepackage{nicefrac}

\usepackage[small]{caption}
\usepackage{subcaption}
\usepackage{graphicx}
\usepackage{float}
\usepackage{booktabs}
\usepackage{multirow}
\usepackage{enumitem}
\usepackage[table]{xcolor}
\usepackage{tikz}

\usepackage{url}
\usepackage{hyperref}
\definecolor{mydarkblue}{rgb}{0,0.08,0.45}
\hypersetup{
  colorlinks=true,
  linkcolor=mydarkblue,
  citecolor=mydarkblue,
  urlcolor=mydarkblue
}
\usepackage[capitalise,nameinlink]{cleveref}

\theoremstyle{plain}
\newtheorem{proposition}{Proposition}
\newtheorem{lemma}{Lemma}
\newtheorem{corollary}{Corollary}
\theoremstyle{definition}

\newtheorem{assumption}{Assumption}
\theoremstyle{remark}

\crefname{assumption}{Assumption}{Assumptions}

\DeclareMathOperator*{\argmin}{arg\,min}
\DeclareMathOperator*{\argmax}{arg\,max}
\DeclareMathOperator{\supp}{supp}
\DeclareMathOperator{\Cov}{Cov}
\newcommand{\KL}{D_{\mathrm{KL}}}
\newcommand{\Dn}{\mathcal{D}_n}
\newcommand{\loss}{\mathcal{L}}
\newcommand{\losse}{\mathcal{L}_{\mathrm{emp}}}
\newcommand{\TM}{\mathcal{T}_M}
\newcommand{\Ssp}{\mathcal{S}}
\newcommand{\E}{\mathbb{E}}
\newcommand{\rt}{\tilde r_\Pi}
\newcommand{\Pcov}{\mathcal{P}}
\newcommand{\bth}{\bm{\theta}}
\newcommand{\bx}{\mathbf{x}}
\newcommand{\bX}{\mathbf{X}}

\newcommand{\fakeparagraph}[1]{\textbf{#1}\quad}

\begin{document}

\runningtitle{How Well Do Prior-Data Fitted Networks Amortize Bayesian Inference?}

\twocolumn[

\aistatstitle{How Well Do Prior-Data Fitted Networks Amortize \\ Bayesian Inference?}

\aistatsauthor{ Manuel H\"ulskamp \quad Julius Kobialka  \quad Emanuel Sommer  \quad Thomas Nagler  \quad David R\"ugamer }

\aistatsaddress{LMU Munich and Munich Center for Machine Learning } ]

\begin{abstract}
Prior-data fitted networks (PFNs) are trained to approximate the Bayesian posterior predictive distribution (PPD) of a synthetic prior, yet results about the approximation quality are surprisingly thin. Taking an empirical risk minimization perspective, we decompose the deployment error of PFNs into prior, context, and amortization error, and measure the latter against a reference PPD under a realistic prior. Results show that PFNs generally recover the targeted PPD very well.  Nevertheless, we find that parts of the residual amortization error persist, especially in \textit{rare} settings under the prior pretraining distribution, even when scaling the model size, which manifests in disagreement between independently pretrained PFNs and leaves uncertainty about the predictor at inference time. As a remedy, we study multiple types of Bayesian model averages and find that these can consistently reduce this uncertainty and improve predictions up to full-size TabICLv2 models.
\end{abstract}

\section{Introduction}\label{sec:intro}
\begin{figure}[t]
    \centering
    \tikzset{
        ob/.style={draw, rounded corners=2pt, align=center, inner sep=2.5pt, line width=0.5pt, font=\scriptsize},
        otop/.style={ob, anchor=north, minimum height=1.15cm},
        oleaf/.style={ob, anchor=north west, line width=0.4pt},
        oprior/.style={draw=red!30!black!45, fill=red!3, text=black!65},
        oamort/.style={draw=blue!65!black, fill=blue!14, line width=1.1pt},
        octx/.style={draw=green!30!black!45, fill=green!3, text=black!65},
        lprior/.style={draw=red!30!black!40, fill=red!1, text=black!65},
        lamort/.style={draw=blue!65!black, fill=blue!6, line width=0.6pt},
        olink/.style={draw=black!35, line width=0.5pt},
        oalink/.style={draw=blue!65!black, line width=0.8pt},
    }
    \begin{tikzpicture}
    \node[ob, anchor=north, draw=black!70, fill=black!4, text width=5.2cm] (root) at (0,0)
        {\textbf{Deployment error} $R_{\Pi'}(q)-H_{\Pi'}$ {\color{black!55}(\S\ref{sec:decomp_method})}};

    \node[otop, oprior, text width=2.2cm] (prior) at (-2.75,-0.75) {\textbf{Prior error}};
    \node[otop, oamort, text width=2.7cm] (amort) at (0,-0.75)
        {\textbf{Amortization error}\\[1pt] $\mathcal{A}_\Pi(q)$};
    \node[otop, octx, text width=2.2cm] (ctx) at (2.75,-0.75) {\textbf{Context error}\\[1pt] $\mathcal{C}_\Pi$};

    \draw[olink] (root.south) -- ++(0,-0.1) -| (prior.north);
    \draw[olink] (root.south) -- ++(0,-0.1) -| (ctx.north);
    \draw[oalink] (root.south) -- (amort.north);

    \node[oleaf, lprior, text width=1.75cm] (cov) at ([xshift=0.3cm, yshift=-0.22cm]prior.south west)        {Coverage $\Pcov_{\Pi'}(\Pi)$};
    \node[oleaf, lprior, text width=1.75cm] (tra) at ([yshift=-0.12cm]cov.south west)
        {Transport $\lambda\,\Gamma_{\Pi'}(q)$};
    \node[oleaf, lprior, text width=1.75cm] (off) at ([yshift=-0.12cm]tra.south west)
        {Off-support\\ $(1{-}\lambda)\,\mathcal{O}_{\Pi'}(q)$};
    \foreach \t in {cov,tra,off}
        \draw[olink] ([xshift=0.15cm]prior.south west) |- (\t.west);

    \node[oleaf, lamort, text width=2.2cm] (opt) at ([xshift=0.3cm, yshift=-0.22cm]amort.south west)
        {Optimization};
    \node[oleaf, lamort, text width=2.2cm] (est) at ([yshift=-0.12cm]opt.south west)
        {Estimation};
    \node[oleaf, lamort, text width=2.2cm] (app) at ([yshift=-0.12cm]est.south west)
        {Approximation};
    \foreach \t in {opt,est,app}
        \draw[oalink] ([xshift=0.15cm]amort.south west) |- (\t.west);
    \end{tikzpicture}
    \caption{Decomposition of the deployment error of a PFN (\cref{prop:decomp}). This paper studies the amortization error, the only term controlled by pretraining, which splits into optimization, estimation, and approximation error (\cref{eq:decomp_eq}). The context and prior error support this analysis: the context error is incurred even by the exact PPD, and the prior error arises when the deployment tasks differ from the prior (see \cref{sec:decomp_method}).}
    \label{fig:overview}
\end{figure}
Prior-data fitted networks (PFNs) have emerged as powerful foundation models for tabular data \citep{TabPFN1_2021, tabPFN3_2025}. By learning to approximate Bayesian posterior predictive distributions (PPDs) via in-context learning \citep{ICL} during pretraining, they perform inference on new downstream datasets in a single forward pass. Given their success, many works have proposed strategies to improve predictive performance, scale to larger datasets, or transfer the approach to other tasks \citep{tabicl2, zhang2025mitra, breejenTabForest, thomasLocalPFN, xuMicturePFN, LiuTabPFNUnleashed, ma2025foundation, robertson2025pfn, balazadeh2025causalpfn}.
The Bayesian interpretation is central to this success. A PFN is trained on datasets drawn from a synthetic prior, and its training objective is minimized by the PPD of that prior. In other words, a PFN is trained to minimize its risk under its own prior, and the PPD is the optimal predictor there.

Surprisingly, the evidence that trained PFNs closely track the PPD they amortize is thin and mostly focused on whether \textit{transformers can do Bayesian inference} at all via the PPD \citep{TabPFN1_2021} and not how well they recover it.
Moreover, pretraining is typically reduced to first-order optimization of a single network, on the grounds that effectively unlimited synthetic data should remove any uncertainty about the outcome. Whether this holds is a question about amortization uncertainty. Existing work on uncertainty in PFNs \citep{NaglerMartingale, FortiniDecomposition} studies the uncertainty that remains for a \emph{fixed} network at inference time and is, by construction, blind to it.

Furthermore, when PFNs are deployed on real data, the model likely faces a different distribution of tasks, and the optimal predictor changes with it. Hence, deployment is a distribution shift from the pretraining prior to the real-world task distribution, which further complicates the error source analysis of deployed PFNs.

\textbf{Our Contribution}\quad In this work, we make the following contributions to study the sources of error and uncertainty in PFNs with a focus on amortization:
\begin{enumerate}[leftmargin=1.5em, nosep]
    \item A decomposition of the error of any PFN under a deployment distribution, splitting into prior, context and amortization error.
    \item A study of the amortization error of PFNs under its training prior, including a comparison to a realistic reference PPD (\cref{sec:decomp}).
    \item An extended investigation of the amortization error of PFNs under a deployment task distribution (\cref{sec:deployment}).
    \item An analysis of approximate Bayesian model averages over PFN weights (\cref{sec:bayes}): Deep ensembles consistently improve predictions, whereas sampling and variational learning add little.
\end{enumerate}

\section{Amortization error}\label{sec:decomp}
\subsection{Setup}\label{sec:setup}
Following \citet{NaglerPFNs}, a \emph{task} is a law $p$ over $(\bX,Y)\in\mathcal{X}\times\mathcal{Y}$ with covariate law $p_{\bX}$ and conditional $p(y\,|\,\bx)$, and a dataset $\Dn=\{(\bX_i,Y_i)\}_{i=1}^n$ consisting of $n$ independent and identically distributed ({iid}) draws from it. A PFN assumes a \emph{prior} $\Pi$ over pairs $(p,N)$ of a task and a context size, whose marginal over $N$ is the \emph{size distribution} $\Psi$. Given $(p,N{=}n)$, the context is $\Dn\sim p^{\otimes n}$, and the PFN targets the posterior predictive distribution (PPD)
\begin{equation}\label{eq:ppd}
  \pi_\Pi(y\mid\bx,\Dn)=\int p(y\mid\bx)\,d\Pi(p\mid\bx,\Dn).
\end{equation}
At deployment, pairs follow a different distribution $\Pi'$ with size marginal $\Psi'$.

In the following, we study PFNs through the lens of risk minimization. Risks average over a task, a context $\Dn\sim p^{\otimes n}$ and an independent query $(\bX,Y)\sim p$. We write $\E_p[\cdot]:=\E_{\Dn\sim p^{\otimes n}}\E_{(\bX,Y)\sim p}[\cdot]$ for a fixed pair $(p,n)$ and $\E_\Pi[\cdot]:=\E_{(p,N)\sim\Pi}\E_p[\cdot]$ when the pair is drawn as well, likewise for $\Pi'$. Random variables are capitalized.

\subsection{Risk Under a Task Distribution} \label{sec:risk}
A PFN with parameters $\bth\in\Theta$ is a measurable map $q_{\bth}\colon(\bx,\Dn)\mapsto q_{\bth}(\cdot\mid\bx,\Dn)$ into the distributions on $\mathcal{Y}$.
Deployed on $\Pi'$, its risk under the log loss is
\begin{equation}\label{eq:risk}
  R_{\Pi'}(q_{\bth}):=\E_{\Pi'}\big[-\log q_{\bth}(Y\mid\bX,\mathcal D_N)\big].
\end{equation}
The following proposition characterizes the optimal predictor under an arbitrary task distribution.

\begin{proposition}\label{prop:bayes}
Let $R_{\Pi'}(\pi_{\Pi'})$ be finite. Then for every predictor $q$,
\begin{align}\label{eq:bayes-gap}
  R_{\Pi'}&(q)-R_{\Pi'}(\pi_{\Pi'})= \notag \\
  &\E_{\Pi'}\big[\KL\big(\pi_{\Pi'}(\cdot\mid\bX,\mathcal D_N)\,\big\|\,q(\cdot\mid\bX,\mathcal D_N)\big)\big] \ge 0,
\end{align}
with equality if and only if $q=\pi_{\Pi'}$ almost surely under $\Pi'$. In particular, $\pi_{\Pi'}$ is the unique minimizer of $R_{\Pi'}$ and $R_{\Pi'}(\pi_{\Pi'})$, the \emph{Bayes risk} of $\Pi'$, is the lowest risk any predictor can attain on $\Pi'$.
\end{proposition}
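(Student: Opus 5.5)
The plan is to use the fact that $\pi_{\Pi'}$ is the conditional law of the query label given the query covariates and the context under the joint generative model $\Pi'$. Then I rewrite the risk difference as an expected log-ratio and condition on $(\bX,\mathcal D_N)$.

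First I make the conditional-law claim precise. Under $\Pi'$ the tuple $(p,N,\mathcal D_N,\bX,Y)$ has a joint law. Given $(p,N)$, the query $(\bX,Y)$ is independent of $\mathcal D_N$ and distributed as $p$. Disintegrating, the conditional law of $p$ given $(\bX,\mathcal D_N)$ is $\Pi'(p\mid\bX,\mathcal D_N)$. Given $(p,\bX,\mathcal D_N)$, the label $Y$ has law $p(\cdot\mid\bX)$. Integrating out $p$ then shows that the conditional density of $Y$ given $(\bX,\mathcal D_N)$ is exactly $\pi_{\Pi'}(\cdot\mid\bX,\mathcal D_N)$ as in \eqref{eq:ppd}. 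I would state this as a short tower-property computation, assuming densities with respect to a common dominating measure on $\mathcal Y$ and regular conditional distributions, which hold on standard Borel spaces.

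Second, I handle the risk difference. Since $R_{\Pi'}(\pi_{\Pi'})$ is finite, I may write
\[
R_{\Pi'}(q)-R_{\Pi'}(\pi_{\Pi'})=\E_{\Pi'}\Big[\log\frac{\pi_{\Pi'}(Y\mid\bX,\mathcal D_N)}{q(Y\mid\bX,\mathcal D_N)}\Big].
\]
If $R_{\Pi'}(q)=\infty$, both sides are $+\infty$ and the identity is interpreted that way. I then condition on $(\bX,\mathcal D_N)$. By the first step, the inner conditional expectation is $\int \pi_{\Pi'}(y\mid\bX,\mathcal D_N)\log\frac{\pi_{\Pi'}(y\mid\bX,\mathcal D_N)}{q(y\mid\bX,\mathcal D_N)}\,dy$, which is the KL divergence in \eqref{eq:bayes-gap}. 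Nonnegativity follows from Gibbs' inequality. Equality holds iff the KL vanishes almost surely, that is, iff $q(\cdot\mid\bX,\mathcal D_N)=\pi_{\Pi'}(\cdot\mid\bX,\mathcal D_N)$ for $\Pi'$-almost every $(\bX,\mathcal D_N)$. Uniqueness of the minimizer and the Bayes-risk interpretation are immediate.

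The main obstacle is integrability in the conditioning step, because the log-ratio need not be integrable. To handle it, I would split $-\log q$ into positive and negative parts, or equivalently note that the conditional KL is a nonnegative quantity. I would then apply the tower property for nonnegative (extended-valued) integrands. Subtracting the finite term $R_{\Pi'}(\pi_{\Pi'})$ is then legitimate. The alternative would be to require a priori that the negative part of $\log q$ is integrable.
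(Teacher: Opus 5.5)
Your proposal is correct and follows the paper's proof essentially step for step. The paper first establishes (its fact (F1), via the tower property over $p$ and Tonelli) that $\pi_{\Pi'}(\cdot\mid\bX,\mathcal D_N)$ is the conditional label density given $(\bX,\mathcal D_N)$. It then conditions on $Z=(\bX,\mathcal D_N)$ (its fact (F2)) to turn the log-loss gap into the expected KL, using finiteness of $R_{\Pi'}(\pi_{\Pi'})$ exactly as you do.

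Your integrability discussion goes beyond the paper, which leaves this point implicit in (F2). Nonnegativity of the conditional KL alone does not justify the tower property here. The clean justification is that $\bigl(\log(\pi_{\Pi'}/q)\bigr)^-\le q/\pi_{\Pi'}$, whose conditional expectation is at most $1$, so the log-ratio is quasi-integrable and the tower property applies.
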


The proof is given in \cref{app:proof-bayes}. A PFN $q_{\bth}$ is pretrained on the population objective
{\begin{equation}\label{eq:obj}
  \loss(\bth):=R_{\Pi}(q_{\bth}).
\end{equation}}
With $\Pi'=\Pi$, \cref{prop:bayes} thus gives the pretraining objective: \emph{a PFN is trained to minimize its risk under its own prior}, the minimizer over all predictors is $\pi_\Pi$ \citep{TabPFN1_2021}, and the excess risk
\begin{equation}\label{eq:amort}
  \mathcal{A}_\Pi(q_{\bth}):=R_\Pi(q_{\bth})-R_\Pi(\pi_\Pi)=\E_\Pi\big[\KL\big(\pi_\Pi\,\|\,q_{\bth}\big)\big]\ \ge0
\end{equation}
is the \emph{amortization error}. This error is the distance, in expected KL, of the network from the PPD it is supposed to amortize.

In practice, \cref{eq:obj} is approximated by empirical risk minimization over $M$ datasets $\mathcal{D}^{(j)}$ whose task and size $(p_j,n_j)$ are drawn from the prior,
\begin{align}\label{eq:erm}
  \losse(\bth) := -\tfrac{1}{M}\textstyle\sum_{j=1}^M\log q_{\bth}\big(y_j\mid\bx_j,\mathcal{D}^{(j)}\big),
\end{align}
optimized with first-order stochastic gradient methods such that, in general, we have no guarantee how close we get to the target $\pi_\Pi$.

\Cref{fig:overview} places the amortization error among the two other sources of error of a deployed PFN: the \emph{context error}, incurred even by $\pi_\Pi$ because a finite context leaves the task uncertain, and the \emph{prior error}, incurred because real tasks follow $\Pi'\neq\Pi$. We focus on the former and work towards its precise relation to the other two in \cref{prop:decomp}. As a first step, we propose how to analyse the amortization error for a tractable but realistic reference prior in the following section.

\begin{figure*}[ht]
    \centering
    \includegraphics[width=0.92\linewidth]{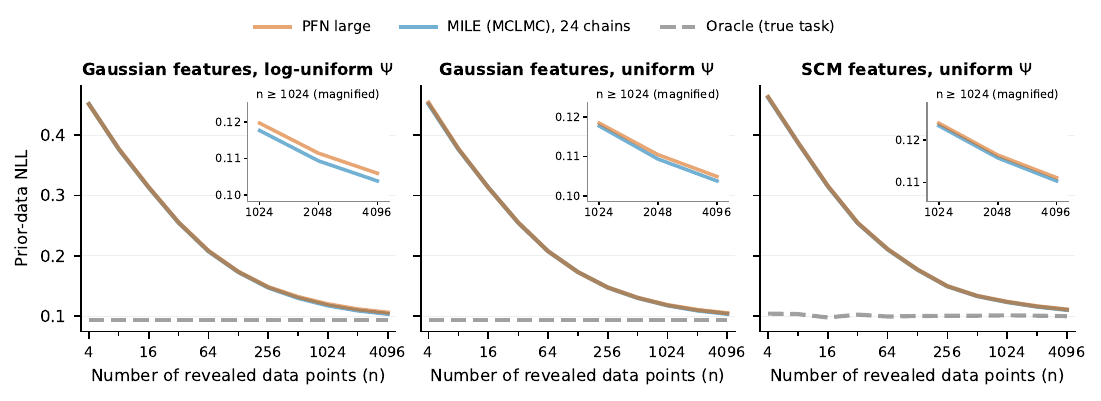}
    \caption{Comparison of PPD approximation quality to MCMC reference on a classification task across both Gaussian and TabICL prior feature distributions (SCM prior). The labeling mechanism is a Bayesian neural network with two hidden layers with 16 neurons each, and the large TabICL variant is trained on these priors from scratch. The PFN approximates the ground-truth PPD provided by the MCMC reference very well across different context sizes (Figure inspired by \citealp{TabPFN1_2021}). Each point is the risk on datasets of context size $n$; $\Psi$ is the size distribution of pretraining.}
    \label{fig:PFNMCMC}
\end{figure*}

\subsection{Measuring Amortization Error}\label{sec:measure_amort}
Measuring $\mathcal{A}_\Pi(q)=\E_\Pi[\KL(\pi_\Pi\|q)]$ directly requires knowledge of the true PPD, which is intractable for the priors used by current tabular foundation models. The evidence that trained PFNs closely track the PPD they amortize comes from \citet{TabPFN1_2021, muller2024bayes}. The considered settings are low-dimensional compared to current structural causal model (SCM) prior \citep{tabicl2, tabPFN3_2025},  none use a current PFN architecture at scale, and only small context sizes are considered. Furthermore, the MCMC reference performed worse than the PFN under consideration.

To fill this gap, we measure the amortization error not only in the setting introduced in \citet{TabPFN1_2021} with standard Gaussian features but also in a considerably more realistic setting by keeping the covariate generator of a state-of-the-art synthetic prior, so that the context tables have the marginal structure of current SCM priors (mixed feature types, correlations, non-Gaussian marginals). We replace the random mechanism mapping covariates to targets by a Bayesian neural network (BNN) $f_{\bm{w}}$ defining the likelihood $y\mid\bx,\bm{w}\sim p(y\mid f_{\bm{w}}(\bx))$ with $\bm{w}$ drawn from a Gaussian prior, analogous to the BNN prior introduced in \citet{hollmann2023tabpfn}. Because the covariate generator and $\bm{w}$ are drawn independently, this prior factorizes. Conditioning on $\bx^*$ leaves the posterior over $\bm{w}$ unchanged, and the PPD reduces to
\begin{equation}\label{eq:bnn_ppd}
  \pi_\Pi(y\mid\bx^*,\Dn)=\int p\big(y\mid f_{\bm{w}}(\bx^*)\big)\,d\Pi(\bm{w}\mid\Dn).
\end{equation}
Making use of recent advances in both MCMC algorithm development \citep{RobnikMCLMC} as well as sampling-based inference for BNNs \citep{EmanuelMILE, EmanuelSMILE}, we compare the prediction of PFNs against the PPD they are trained to amortize to obtain an empirical lower bound on the approximation error. MCMC yields a reference $\hat\pi_\Pi$ against which $\KL(\hat\pi_\Pi\|q)$ can be computed for each pair $(\bx^*,\Dn)$. In \cref{fig:PFNMCMC}, we plot the aggregate NLL of a PFN against a large-budget MCMC reference. In the inset plots, it is visible that the MCMC reference, where 24 MCMC chains are sampled per task, still dominates the PFN, however, the amortization gap appears small and mostly dependent on what datasets are probable under the prior (deviations at small context settings for $\Psi$ uniform vs. deviations at large context settings for $\Psi$ loguniform). Details are given in \cref{app:tracktablePrior}.

While this approach is sufficient to study the general capabilities of transformers to approximate the PPD and measure quantitatively how good the approximation is in terms of the amortization error, two aspects complicate the analysis once we move to production-grade synthetic priors and deployment on real-world datasets: We cannot compute the reference PPD, and deployment under a distribution $\Pi'$ different from the training distribution $\Pi$ introduces a distribution shift (see \cref{app:tracktablePrior}). The following studies these problems in more detail.

\section{Amortization Error Under Deployment}\label{sec:deployment}
\subsection{Amortization Error Without a Reference}\label{sec:quant_amort}
Beyond tractable priors, we cannot compute a reference PPD, and the amortization error has to be studied without an (approximate) ground-truth. Viewing the PFN with parameters  $\hat\bth$ via empirical risk minimization, the amortization error splits into three components
\begin{align}\label{eq:decomp_eq}
\mathcal{A}_\Pi(q_{\hat\bth})
  =\;&\underbrace{R_\Pi(q_{\hat\bth})-R_\Pi(q_{\tilde\bth_M})}_{\text{optimization}}
  +\underbrace{R_\Pi(q_{\tilde\bth_M})-R_\Pi(q_{\bth^*})}_{\text{estimation}} \notag\\
  &+\underbrace{R_\Pi(q_{\bth^*})-R_\Pi(\pi_\Pi)}_{\text{approximation}},
\end{align}
where $\tilde\bth_M$ minimizes the empirical risk for $M$ training samples and $\bth^*=\argmin_{\bth\in\Theta}\loss(\bth)$, i.e., minimizes the population risk (\ref{eq:obj}). The \textbf{estimation error} is the part that classical learning theory ties to the number of pretraining datasets $M$ and expects to decrease at a $1/M$ rate \citep{NaglerPFNs}. This is where PFNs profit from effectively unlimited synthetic data. The \textbf{approximation error} is a property of the hypothesis class and is reduced only by enlarging it, which extensive work on architectures attempts \citep{tabicl2, tabPFN4_2025,Kong2026}. The \textbf{optimization error} is a consequence of the optimizer and for overparameterized networks trained on a non-convex risk, it is not guaranteed to decrease with model size or training duration.

To confirm that the estimation error vanishes as expected, we can study $K$ models that share all optimizer settings but differ in seed and training data, at several scales $M_1<\dots<M_L$, and compute
\begin{equation}\label{eq:ee_metric}
  \widehat{EE}(M)=\tfrac{1}{K}\textstyle\sum_{k=1}^K\loss_{\mathrm{test}}(\bth_M^{(k)})-\loss_{\mathrm{test}}(\bth_{\mathrm{ref}}),
\end{equation}
where $\bth_{\mathrm{ref}}$ is a reference checkpoint with $M_{\mathrm{ref}}\gg M_L$. Averaging over the $K$ runs cancels the zero-mean optimization noise of individual runs.

\subsection{Shared \& Removable Amortization Error}\label{sec:shared_removable}

Optimization and approximation error cannot be isolated in the same way, since neither $\tilde\bth_M$ nor $\bth^*$ nor $\pi_\Pi$ is tractable. What is observable is the functional spread across $K$ independently trained ensemble members $q_1,\dots,q_K$ with identical $M$, model size, and optimizer settings. We measure this spread by the generalized Jensen--Shannon divergence $\widehat{AU}_{\mathrm{JSD}}$ between the members' predictive distributions and by the pairwise argmax disagreement rate $\widehat{AU}_{\mathrm{dis}}$, both averaged over test points (definitions in \cref{app:metrics}). Neither requires labels. The following classical identity \citep{cover2006elements} connects this spread to predictive performance.

\begin{lemma}\label{lem:ambiguity}
For any distribution $p$ on $\mathcal{Y}$ and predictors $q_1,\dots,q_K$ with mixture $\bar q=\frac1K\sum_{k=1}^K q_k$,
\begin{equation}\label{eq:ambiguity}
\tfrac1K\textstyle\sum_{k=1}^K\KL(p\,\|\,q_k)=\KL(p\,\|\,\bar q)+{\Delta_p},
\end{equation}
where ${\Delta_p}:=\E_{y\sim p}\big[\log\bar q(y)-\frac1K\sum_{k=1}^K\log q_k(y)\big]\ge0$, with equality if and only if all $q_k$ agree $p$-almost everywhere.
\end{lemma}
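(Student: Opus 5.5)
The identity itself is pure algebra on log-ratios; the only content is the sign of $\Delta_p$ and its equality case. I would first write each divergence as $\KL(p\,\|\,q_k)=\E_{y\sim p}[\log p(y)-\log q_k(y)]$ and average over $k$, so that $\tfrac1K\sum_k\KL(p\,\|\,q_k)=\E_{y\sim p}[\log p(y)-\tfrac1K\sum_k\log q_k(y)]$. Adding and subtracting $\log\bar q(y)$ inside the expectation splits this into $\E_{y\sim p}[\log p(y)-\log\bar q(y)]=\KL(p\,\|\,\bar q)$ plus exactly $\Delta_p$. This gives \cref{eq:ambiguity}. Throughout I take densities with respect to a common dominating measure, for instance counting measure in the classification setting.

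Before splitting, I would settle integrability so that no $\infty-\infty$ appears. If some $q_k$ vanishes on a set of positive $p$-mass, then the left side is $+\infty$. In that case $\Delta_p=+\infty$ as well, because $\bar q\ge q_k/K>0$ wherever $p>0$ unless all $q_k$ vanish there. So the identity holds in $[0,\infty]$ as long as we interpret each term in the extended reals. In the finite case, $\bar q\ge q_k/K$ gives $\log\bar q\ge\log q_k-\log K$. Hence $\KL(p\,\|\,\bar q)\le\KL(p\,\|\,q_k)+\log K$ is finite, and the splitting is legitimate. This bookkeeping is the only step I expect to require care; everything else is standard.

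For nonnegativity I would apply Jensen's inequality pointwise in $y$ to the strictly concave function $\log$: $\log\big(\tfrac1K\sum_k q_k(y)\big)\ge\tfrac1K\sum_k\log q_k(y)$. This is the AM--GM inequality, and it means the integrand of $\Delta_p$ is nonnegative for every $y$. Taking the expectation under $p$ then gives $\Delta_p\ge0$.

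For the equality case, note that $\Delta_p$ is the $p$-expectation of a nonnegative function. It is therefore zero if and only if that function vanishes $p$-almost everywhere. By strict concavity, pointwise equality in AM--GM holds exactly when $q_1(y)=\dots=q_K(y)$, including the degenerate case where all vanish. So $\Delta_p=0$ if and only if all $q_k$ agree $p$-almost everywhere, as claimed.
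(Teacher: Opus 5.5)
Your proof is correct and follows the paper's argument: add and subtract $\log\bar q(y)$ pointwise on $\{p>0\}$, then use strict concavity of $\log$ to get a nonnegative integrand for $\Delta_p$ that vanishes exactly where $q_1(y)=\dots=q_K(y)$. The paper avoids your extended-real bookkeeping by invoking its standing finiteness assumption (\cref{as:finite}). If you keep the bookkeeping, two small corrections are needed: when all $q_k$ vanish together on a set of positive $p$-mass, it is $\KL(p\,\|\,\bar q)$ that is infinite while $\Delta_p$ may be finite; and the case $q_k>0$ $p$-a.e.\ with some $\KL(p\,\|\,q_k)=\infty$ also needs a line (the negative part of $\log(p/\bar q)$ is $p$-integrable, so the split still holds in $(-\infty,\infty]$).
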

{$\Delta_p$ is the \emph{averaging gain}, i.e., the amount by which the mixture improves on its average member in expected log loss under $p$.} Applied with $p=\pi_\Pi$, \cref{lem:ambiguity} splits the amortization error itself:

\begin{corollary}\label{cor:bias_variance}
For PFNs $q_1,\dots,q_K$ with mixture $\bar q$,
\begin{equation}\label{eq:bias_variance}
  \tfrac1K\textstyle\sum_{k}\mathcal{A}_\Pi(q_k)=\underbrace{\mathcal{A}_\Pi(\bar q)}_{\text{shared}}+\underbrace{\E_\Pi\big[{\Delta_{\pi_\Pi}}\big]}_{\text{removable}\ \ge0}.
\end{equation}
\end{corollary}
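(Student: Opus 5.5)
The plan is to apply \cref{lem:ambiguity} pointwise and then integrate. Fix a query--context pair $(\bx,\Dn)$ and set $p=\pi_\Pi(\cdot\mid\bx,\Dn)$, $q_k=q_k(\cdot\mid\bx,\Dn)$. Then $\bar q(\cdot\mid\bx,\Dn)$ is exactly the pointwise mixture appearing in the lemma. The lemma gives, for every such pair,
\begin{equation*}
\tfrac1K\textstyle\sum_k\KL\big(\pi_\Pi\,\|\,q_k\big)=\KL\big(\pi_\Pi\,\|\,\bar q\big)+\Delta_{\pi_\Pi}(\bx,\Dn),
\end{equation*}
with $\Delta_{\pi_\Pi}(\bx,\Dn)\ge0$.

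Next we take $\E_\Pi$ of both sides, i.e.\ integrate over $(p,N)\sim\Pi$, $\Dn\sim p^{\otimes n}$ and the query $\bX$. The left side becomes $\frac1K\sum_k\E_\Pi[\KL(\pi_\Pi\|q_k)]$. By \cref{eq:amort} this equals $\frac1K\sum_k\mathcal{A}_\Pi(q_k)$, and the first term on the right becomes $\mathcal{A}_\Pi(\bar q)$.

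\cref{eq:amort} is stated for a PFN $q_{\bth}$, but the identity $R_\Pi(q)-R_\Pi(\pi_\Pi)=\E_\Pi[\KL(\pi_\Pi\|q)]$ is \cref{prop:bayes} applied with $\Pi'=\Pi$, and that proposition holds for every predictor. It therefore covers $\bar q$, which is a measurable map into distributions on $\mathcal{Y}$ even though it need not be a single network $q_{\bth}$.

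The one delicate step is interchanging expectation with the finite sum and splitting the integral of a sum into a sum of integrals. This requires either no $\infty-\infty$ terms or nonnegative integrands. All three integrands are nonnegative: each KL term is nonnegative, and $\Delta_{\pi_\Pi}\ge0$ by the lemma. Linearity of the integral for nonnegative measurable functions, with values allowed in $[0,\infty]$, therefore gives the identity with no finiteness assumption. The identity holds in $[0,\infty]$, and whenever the average amortization error on the left is finite, both terms on the right are finite.

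Measurability of $(\bx,\Dn)\mapsto\Delta_{\pi_\Pi}(\bx,\Dn)$ follows from its expression as the difference of the two measurable nonnegative KL quantities when these are finite. In general it follows directly from its integral definition via Fubini. Nonnegativity of the removable term then follows because it is the expectation of a nonnegative quantity. Altogether this is a pointwise application of the lemma followed by monotone integration, and I expect no substantive obstacle beyond these integrability checks.
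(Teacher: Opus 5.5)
Your proposal is correct and follows the paper's proof essentially verbatim: use \cref{prop:bayes} with $\Pi'=\Pi$ to write $\mathcal{A}_\Pi(q)=\E_\Pi[\KL(\pi_\Pi\,\|\,q)]$ for every predictor, including $\bar q$. Then apply \cref{lem:ambiguity} at each $(\bX,\Dn)$ with $p=\pi_\Pi(\cdot\mid\bX,\Dn)$, and split the expectation using non-negativity of all three terms. One small caveat: your claim that the identity holds in $[0,\infty]$ ``with no finiteness assumption'' needs \cref{lem:ambiguity} to hold for infinite pointwise KLs (this is true, since each log-ratio has integrable negative part), whereas the paper's proof of that lemma simply relies on \cref{as:finite}.
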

Thus, the average amortization error of $K$ PFNs splits into two parts that differ in what can be observed without the PPD. The first, $\mathcal{A}_\Pi(\bar q)$, is the error the members have in common as they jointly deviate from $\pi_\Pi$, which cannot be measured without a reference as in \cref{sec:measure_amort}. The second term is the part of the amortization error that differs between the members and does not require a reference to compute. It equals the log-loss gain $\tfrac1K\sum_k R_\Pi(q_k)-R_\Pi(\bar q)$ of the mixture over its average member, which can be estimated on datasets drawn from the prior, and it vanishes if and only if the members agree. The disagreement metrics $\widehat{AU}_{\mathrm{JSD}}$ and $\widehat{AU}_{\mathrm{dis}}$ are therefore label-free probes of this visible part, and blind to the shared one.

Revisiting \cref{eq:decomp_eq}, if the estimation error vanishes at the expected rate while disagreement persists, the remaining amortization error is consistent with optimization error, possibly combined with residual approximation error: independent runs reach functionally distinct solutions that more data does not reconcile. To test whether the removable part is driven to zero by the usual scaling levers, we study $\widehat{EE}$ and both disagreement metrics along two axes, the number of pretraining datasets $M$ (data axis) and the model size (capacity axis). Results are depicted in \cref{fig:scaling_PFN_joint}.

Under $\Pi'\neq\Pi$, \cref{prop:bayes} says that the optimal predictor is $\pi_{\Pi'}$, not the $\pi_\Pi$ the PFN was trained towards. Deploying a PFN on real data is thus a \emph{distribution shift} from $\Pi$ to $\Pi'$. The next result makes its cost explicit and reconciles in what sense model averaging can be beneficial.

\begin{figure*}[t]
    \centering
    \includegraphics[width=1\linewidth]{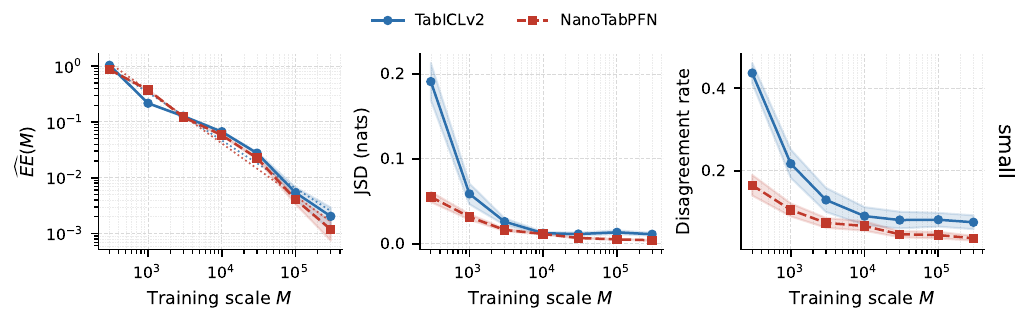}
    \caption{Estimation error $\widehat{EE}(M)$ (log-log) on synthetic data and member disagreement on TabArena ($\widehat{AU}_{\mathrm{JSD}}$, $\widehat{AU}_{\mathrm{dis}}$) versus the number of pretraining datasets $M$ for the small models; the tiny models are shown in \cref{fig:scaling_PFN_tiny}.}
    \label{fig:scaling_PFN_joint}
\end{figure*}

\subsection{Decomposing the Deployment Error}\label{sec:decomp_method}
If the PFN training on $\Pi$ does not perfectly align with the deployment distribution $\Pi'$, this difference needs to be taken into account. To locate the deployment pairs $(p,n)$ relative to the prior, the Lebesgue decomposition splits the deployment distribution uniquely into a part the prior can express $\tilde\Pi$, and a part $\Pi'_\perp$ it cannot,
\begin{equation}\label{eq:lebesgue}
  \Pi'=\lambda\,\tilde\Pi+(1-\lambda)\,\Pi'_\perp,\qquad \tilde\Pi\ll\Pi,\quad \Pi'_\perp\perp\Pi,
\end{equation}
with density $\rt=d\tilde\Pi/d\Pi$ on the absolutely continuous part. The weight $\lambda\in[0,1]$ is the fraction of deployment pairs reachable by reweighting the prior. The singular part $\Pi'_\perp$ lives on pairs to which $\Pi$ assigns probability zero, and no reweighting reaches it. These include tasks the prior cannot generate as well as context sizes outside the pretraining range. As a baseline for the deployment risk, let $H_{\Pi'}:=\E_{\Pi'}[-\log p(Y\mid\bX)]$ be the expected conditional entropy of the deployment tasks, i.e., the risk of an oracle that knows each task $p$ and predicts with $p(y\mid\bx)$. The excess risk of a PFN over this oracle then splits as follows.

\begin{proposition}[informal]\label{prop:decomp}
Under mild regularity conditions, for every PFN $q$,
\begin{align}
  R_{\Pi'}(q)-H_{\Pi'}
  =\;&\underbrace{\Pcov_{\Pi'}(\Pi)}_{\text{coverage}} \notag\\
  &+\lambda\Big[\underbrace{\mathcal{A}_\Pi(q)}_{\text{amortization}}+\underbrace{\mathcal{C}_\Pi}_{\text{context}}+\underbrace{\Gamma_{\Pi'}(q)}_{\text{transport}}\Big] \notag\\
  &+(1-\lambda)\,\underbrace{\mathcal{O}_{\Pi'}(q)}_{\text{off-support}}. \label{eq:decomp}
\end{align}
Coverage, amortization and context error are non-negative; the transport and off-support terms are sign-indefinite.
\end{proposition}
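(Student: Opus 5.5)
Since the proposition is stated informally, the plan is to make each term precise by a chain of add-and-subtract identities, so that the equality holds by construction and the only substantive work is proving non-negativity of the coverage, amortization and context terms. The first step uses the Lebesgue decomposition \eqref{eq:lebesgue} to split both sides linearly:
\[
R_{\Pi'}(q)-H_{\Pi'}=\lambda\big(R_{\tilde\Pi}(q)-H_{\tilde\Pi}\big)+(1-\lambda)\big(R_{\Pi'_\perp}(q)-H_{\Pi'_\perp}\big).
\]
This is valid because both $R$ and $H$ are integrals over the pair $(p,N)$ of a per-pair quantity. The ``mild regularity conditions'' are exactly finiteness of $R_{\Pi'}(q)$, $H_{\Pi'}$, $R_\Pi(\pi_\Pi)$ and the other risks that appear, so that no $\infty-\infty$ arises. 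The second bracket is defined to be the off-support term $\mathcal{O}_{\Pi'}(q)$. It is sign-indefinite because $q$ may beat or lose to the oracle only through behaviour the prior never constrains; more precisely, it is $\ge 0$ per point against the oracle, but it is declared indefinite once the coverage part is subtracted below.

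The second step handles the absolutely continuous part. Writing $R_{\tilde\Pi}(q)=\E_\Pi[\rt\,(-\log q)]$, I decompose around the prior quantities:
\begin{align*}
R_{\tilde\Pi}(q)-H_{\tilde\Pi}
&=\big[R_\Pi(q)-R_\Pi(\pi_\Pi)\big]+\big[R_\Pi(\pi_\Pi)-H_\Pi\big]\\
&\quad+\Big[\big(R_{\tilde\Pi}(q)-H_{\tilde\Pi}\big)-\big(R_\Pi(q)-H_\Pi\big)\Big].
\end{align*}
Here the first bracket is $\mathcal{A}_\Pi(q)\ge0$ by \eqref{eq:amort} and \cref{prop:bayes}. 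The second bracket is the context error $\mathcal{C}_\Pi=\E_\Pi[\KL(p(\cdot\mid\bX)\,\|\,\pi_\Pi(\cdot\mid\bX,\mathcal D_N))]$. It is non-negative by the same Gibbs argument as \cref{prop:bayes}, applied conditionally on $p$: the oracle $p(y\mid\bx)$ minimizes the expected log loss under $p$ pointwise. The third bracket is the transport term $\Gamma_{\Pi'}(q)=\E_\Pi[(\rt-1)\,\ell_q]$, where $\ell_q$ is the per-pair excess log loss over the oracle. It is a covariance-type term and is clearly sign-indefinite.

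The third step places the coverage term. The natural choice is to let $\Pcov_{\Pi'}(\Pi)$ absorb whatever remains, so that the identity \eqref{eq:decomp} holds exactly. Concretely, I would define it as a KL-type divergence measuring how far the deployment law is from being covered by the prior, for instance $\lambda\,\KL(\tilde\Pi\|\Pi)$ or a term involving $\log\lambda$. The transport and off-support terms would then be defined as the remainders after removing it, and the main obstacle is choosing this definition so that the coverage term is (i) non-negative, (ii) independent of $q$, and (iii) zero when $\Pi'=\Pi$.

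The way I would try to achieve this is to rewrite $\pi_{\Pi'}$ on the absolutely continuous part as the PPD of the reweighted prior $\tilde\Pi$. I would then split the gap $R_{\tilde\Pi}(\pi_\Pi)-R_{\tilde\Pi}(\pi_{\tilde\Pi})=\E_{\tilde\Pi}[\KL(\pi_{\tilde\Pi}\|\pi_\Pi)]\ge0$ off the transport term, via \cref{prop:bayes} applied to $\tilde\Pi$, and declare this $q$-independent non-negative quantity, weighted by $\lambda$, to be $\Pcov_{\Pi'}(\Pi)$. Checking the remaining bookkeeping, in particular that the transport term stays finite given $\rt$ may be unbounded, is where I expect the regularity assumptions to be genuinely needed. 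The fallback is to assume $\E_\Pi[\rt\,|\ell_q|]<\infty$.
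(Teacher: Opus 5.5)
Your Lebesgue split and your Step 2 are correct and match the paper's argument: the change of measure gives the transport term $\Gamma_{\Pi'}(q)=\Cov_\Pi(\rt,\mathcal{E}_p(q))$, where $\mathcal{E}_p(q)$ is the per-task excess over the oracle, and $\E_\Pi[\mathcal{E}_p(q)]=\mathcal{A}_\Pi(q)+\mathcal{C}_\Pi$, with both terms non-negative by Gibbs' inequality. The gap is in how you define coverage and off-support. The resulting identity is true as bookkeeping, but its terms do not have the claimed properties.

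\textbf{Off-support.} Your term $R_{\Pi'_\perp}(q)-H_{\Pi'_\perp}=\E_{\Pi'_\perp}[\mathcal{E}_p(q)]$ is an expected KL divergence to the true conditional, so it is always non-negative. The predictor $q$ can never beat the oracle, so the justification in your Step 1 is wrong. Your Step 3 takes the coverage term out of the absolutely continuous part, so nothing is ever subtracted from the off-support term to make it sign-indefinite.

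\textbf{Coverage.} Your candidate $\lambda\,\E_{\tilde\Pi}[\KL(\pi_{\tilde\Pi}\,\|\,\pi_\Pi)]$ has the opposite problem: it lives entirely on the part of $\Pi'$ that the prior can express. It vanishes when $\lambda=0$, i.e.\ when no deployment task is expressible, which is maximal misspecification. It is typically positive when $\Pi'\ll\Pi$ merely reweights tasks inside the support. That is a reweighting effect and belongs to transport. Your criteria (i)--(iii) are too weak to exclude it. The intended coverage term is model-free and vanishes if and only if every deployment task is expressible by the prior.

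The missing idea is to compare each deployment task with its I-projection $p^\star$ onto the support $\Ssp$ of the prior's task marginal (\cref{as:proj,as:supp}), not only with the oracle $p$. Writing $\log p-\log q=(\log p-\log p^\star)+(\log p^\star-\log q)$ gives, per task, $\mathcal{E}_p(q)=c_p+\mathcal{E}^\star_p(q)$, where $c_p=\E_p[\KL(p(\cdot\mid\bX)\,\|\,p^\star(\cdot\mid\bX))]\ge0$ and $\mathcal{E}^\star_p(q)=\E_p[\log p^\star(Y\mid\bX)-\log q(Y\mid\bX,\Dn)]$. Since $\tilde\Pi\ll\Pi$ and $\Pi(\Ssp\times\mathbb{N})=1$, $\tilde\Pi$-almost every task has $p^\star=p$ and hence $c_p=0$. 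Three things follow:
\begin{itemize}
\item $\Pcov_{\Pi'}(\Pi):=\E_{\Pi'}[c_p]=(1-\lambda)\E_{\Pi'_\perp}[c_p]$ sits outside the $\lambda$-bracket.
\item Your Step 2 applies unchanged on $\tilde\Pi$.
\item The off-support term becomes $\mathcal{O}_{\Pi'}(q)=\E_{\Pi'_\perp}[\mathcal{E}^\star_p(q)]$. It is genuinely sign-indefinite, because $q$, unlike $p^\star$, is not confined to $\Ssp$ and can beat the closest expressible task.
\end{itemize}
In these terms, your off-support term equals $\E_{\Pi'_\perp}[c_p]+\mathcal{O}_{\Pi'}(q)$, so it has silently absorbed the actual coverage error. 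Your \emph{coverage} is a non-negative piece cut out of $\Gamma_{\Pi'}(q)$. With the I-projection in place, your Step 3 is unnecessary. Beyond existence and measurability of $p^\star$, the regularity needed is finiteness of the cross-entropies and $\Pi$-integrability of $\rt\,\mathcal{E}_p(q)$, which matches your fallback.
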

The formal statement with the definitions of all terms and its proof are given in \cref{app:proof-decomp}. The decomposition separates the deployment error of any PFN into three kinds of error with different levers (illustrated in \cref{fig:overview}), of which only one is controlled by pretraining.

\begin{table*}[t]
  \centering
  \caption{Relative LPPD gain (\%) of Bayesian model averages of TabICLv2 on 30 TabArena classification datasets at context sizes 1000 and 10000. \emph{DE}: deep ensemble of 8 members (4 for the large model) over its average member. \emph{IVON}: average over $S$ posterior samples over the posterior mean of the same model. \emph{SG-MCMC}: Bayesian deep ensemble over the DE it is warm-started from, sampled with pSMILE. $^{\ast}$: significant change, paired Wilcoxon over datasets, $p<0.05$.}
  \vspace{-0.25cm}
  \label{tab:ivon_vs_de}
  \small
  \setlength{\tabcolsep}{4.5pt}
  \begin{tabular}{lcccccccc}
    \toprule
    & \multicolumn{4}{c}{context size 1000} & \multicolumn{4}{c}{context size 10000} \\
    \cmidrule(lr){2-5}\cmidrule(lr){6-9}
    Model & DE & IVON $S{=}8$ & IVON $S{=}64$ & SG-MCMC & DE & IVON $S{=}8$ & IVON $S{=}64$ & SG-MCMC \\
    \midrule
    tiny   & $+3.10$ & $+2.38^{\ast}$ & $+2.48^{\ast}$ & {$+0.27^{\ast}$} & $+2.65$ & $-0.13$ & $-0.24$ & {$+0.35^{\ast}$} \\
    small  & $+2.89$ & $+0.52^{\ast}$ & $+0.65^{\ast}$ & {$+0.37^{\ast}$} & $+2.56$ & $-0.14$ & $-0.04$ & {$+0.42^{\ast}$} \\
    medium & $+2.67$ & $+0.42^{\ast}$ & $+0.44^{\ast}$ & {$+0.07$} & $+2.82$ & $-0.31$ & $-0.14$ & {$+0.08$} \\
    large  & $+1.69$ & $+0.23^{\ast}$ & $+0.18^{\ast}$ & {$+0.27^{\ast}$} & $+1.81$ & $-0.13$ & $-0.06$ & {$+0.32^{\ast}$} \\
    \bottomrule
  \end{tabular}
\end{table*}

\fakeparagraph{Context Error}
{$\mathcal{C}_\Pi=R_\Pi(\pi_\Pi)-H_\Pi$ is the excess risk of the exact PPD and does not involve the network: the price that even an exact Bayesian pays for a finite context. It is the object of \citet{NaglerMartingale} and \citet{ng2025tabmgp}, who separate it from aleatoric uncertainty via martingale posteriors \citep{Fong2023MPs}, conditional on a fixed network.}

\fakeparagraph{Prior Error}
{Coverage, transport and off-support terms describe how the prior relates to the deployment distribution. Coverage is model-free and vanishes if and only if every deployment task can be expressed by the prior. The transport term is the covariance under $\Pi$ between $\rt$ and the PFN's excess risk: it is positive when deployment overweights tasks on which the PFN is poor, so misplaced prior mass is not harmful \emph{per se}. The off-support term is the PFN's excess risk on $\Pi'_\perp$ relative to the closest expressible task and can be negative when the network generalizes beyond its prior \citep{RaventosPretrainingRegression,goddard2025can}. Quantifying these terms for PFNs remains open (\cref{app:related_work}).}

\fakeparagraph{Amortization Error}
{The amortization error enters \cref{eq:decomp} unchanged: it is still evaluated under the \emph{training} prior, weighted by the expressible share $\lambda$, and undefined on $\Pi'_\perp$, where there is no PPD to compare against. It remains the only term that pretraining controls. Importantly, under a deployment distribution $\Pi'\neq\Pi$, member disagreement no longer measures only the removable part $\E_\Pi[\Delta_{\pi_\Pi}]$ of the amortization error, since \cref{cor:bias_variance} identifies it only under the prior. On real data, \cref{lem:ambiguity} and the Lebesgue decomposition \cref{eq:lebesgue} combine into \cref{cor:deployment_amb} (\cref{app:proof-deployment}): the log-loss gain of an ensemble over its average member splits into the removable amortization error of \cref{cor:bias_variance} and a sign-indefinite transport term, both weighted by $\lambda$, plus a non-negative off-support term. Disagreement on real data, and with it the gain of averaging, therefore mixes uncertainty left by pretraining with how the deployment distribution relates to the prior. Since pretraining does not constrain the members on tasks the prior cannot express, we expect disagreement to grow as deployment moves away from the prior's support, e.g., for contexts beyond the pretraining range or datasets whose structure the prior does not generate.}

\section{Bayesian Model Averages Over PFN Weights}\label{sec:bayes}
If part of the amortization error persists and is partially removable in the sense of \cref{cor:bias_variance,cor:deployment_amb}, i.e., if the outcome of pretraining is itself uncertain and deployment adds additional uncertainty, a Bayesian model average of the network weights is the natural remedy.

\fakeparagraph{A Tempered Posterior Over PFN Weights}
Placing a Gaussian prior $p(\bth)=\mathcal{N}(0,\sigma^2I)$ on all weights and combining it with the PFN likelihood gives the posterior $p(\bth\mid\TM)\propto\exp(-M\losse(\bth))\,p(\bth)$, where $\TM$ denotes the collection of all pretraining datasets. Since $M$ can be made arbitrarily large and has no direct meaning as a training dataset size, this posterior collapses onto minimizers of \cref{eq:obj} as $M\to\infty$. To retain meaningful uncertainty, we instead target the \emph{tempered posterior} \citep{wenzelColdPosterior}
\begin{equation}\label{eq:tempered}
  p_\beta(\bth\mid\TM)\propto\exp\big(-\beta\,\losse(\bth)\big)\,p(\bth),
\end{equation}
in which $\beta>0$ plays the role of an effective sample size and decouples posterior concentration from $M$. Equivalently, $p_\beta$ is the unique minimizer over all distributions $Q$ on $\Theta$ of the Gibbs variational objective $\E_{Q}[\losse(\bth)]+\beta^{-1}\KL(Q\,\|\,p)$ \citep{BayesOpti}. In practice, its predictions are an average over $S$ samples from (an approximation of) $p_\beta$,
\begin{equation}\label{eq:bde_pred}
  \bar q(y\mid\bx,\Dn)=\tfrac{1}{S}\textstyle\sum_{s=1}^S q_{\bth^{(s)}}(y\mid\bx,\Dn).
\end{equation}
We compare three approximations of $p_\beta$ in \cref{sec:results_bde}.
\begin{figure*}[t]
    \centering
    \includegraphics[width=1\linewidth]{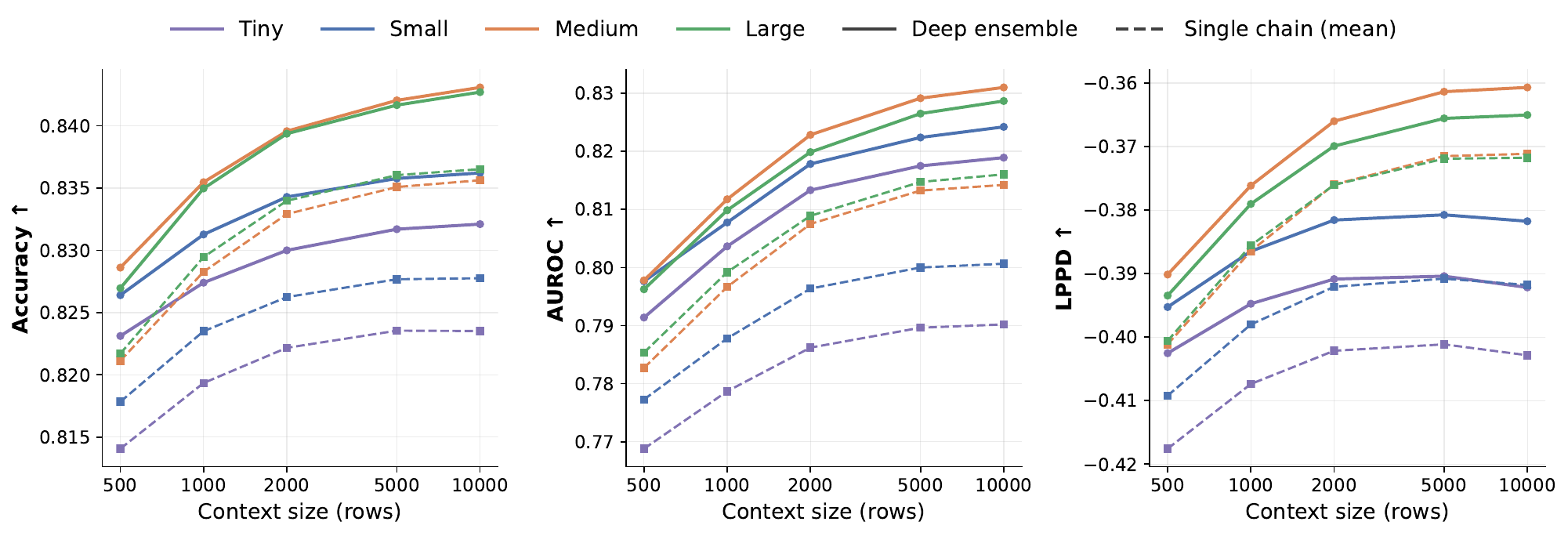}
    \caption{Classification test performance measured by accuracy, ROC AUC and LPPD on the TabArena dataset suite across a sweep of the maximal number of context rows for deep ensembles of TabICLv2 across a range of model sizes. The corresponding results for nanoTabPFN are shown in \cref{fig:DE-scaling-nanoTabPFN}.}
    \label{fig:DE-BDE-scaling}
\end{figure*}

\fakeparagraph{What Averaging Guarantees}
Applied with the true conditional $p$ of a deployment task, \cref{lem:ambiguity} guarantees that the expected log loss of $\bar q$ is never worse than that of the average member, and strictly better whenever the members disagree on that task, regardless of whether the task lies inside the support of the prior. On the expressible part it additionally reduces the amortization error by the removable part of \cref{cor:bias_variance}. Moreover, for finite $\mathcal{Y}$ the disagreement metrics are computable from the members alone, and $\widehat{AU}_{\mathrm{JSD}}>0$ on a dataset certifies ${\Delta_p}>0$ for every $p$ with full support, without labels. It hence can serve as a \textbf{label-free certificate of amortization uncertainty} (on $\Pi'_\perp$  $\widehat{AU}_{\mathrm{JSD}}>0$ only certifies a strict gain of $\bar q$ over its average member as no $\pi_\Pi$ exists).

\section{Experimental Study}\label{sec:experiments}
We now test the claims of \cref{sec:deployment,sec:bayes}: Whether the removable part of the amortization error persists under scaling (\cref{sec:results_scaling}), and whether approximate Bayesian methods can improve PFNs under real-world task distributions (\cref{sec:results_bde}).

\subsection{Setup}\label{subsec:setup}
We focus our experiments on TabICLv2 at up to four different scales: ($\sim$1.5M/2M/7.5M/9M). For some experiments we additionally consider  \textit{nanoTabPFN} \citep{nanotabPFN} at two sizes ($\sim$1M/2M parameters). \ Both are pretrained on synthetic tasks from the TabICL prior \citep{tabicl2} with up to 1024 rows, 100 features and 10 classes. Later, in \cref{sec:fullscale}, we extend the analysis to full-sized TabICLv2 models as well as regression.  We mostly study TabICLv2 because of its greatly improved training efficiency compared to nanoTabPFN and because of its much more refined architecture. Every ensemble member, and every sampling chain, is trained on disjoint pretraining data. Depending on the experiment, we evaluate accuracy (ACC), ROC AUC and log pointwise predictive density \citep[LPPD;][]{gelmanLPPD} on a large synthetic test set drawn from the TabICL prior or real-world classification datasets from TabArena \citep{TabArena} as well as TALENT \citep{liu2024talent}, potentially filtered to the scope of the models. Details on implementation, architectures, and optimization are given in \cref{app:setup}.

\subsection{Disagreement Along Scaling Axes}\label{sec:results_scaling}

\fakeparagraph{Data Axis}
At fixed model size, we compute $\widehat{EE}(M)$ in \cref{eq:ee_metric} and both disagreement metrics across $K{=}4$ members for the two smallest model sizes, keeping the number of training steps fixed. The two behave differently (\cref{fig:scaling_PFN_joint}): $\widehat{EE}(M)$ decreases steadily over the whole range for the small models of both architectures. A log-log fit over all $M$ gives $\widehat{EE}(M)\propto M^{-\alpha}$ with $\alpha\in[0.86,1.14]$ across architectures and sizes, consistent with the expected $1/M$ rate. Disagreement falls quickly for small $M$ but flattens beyond $M\approx10^4$. For larger training scale, the disagreement rate decreases only from $9.1\%$ to $7.6\%$ for TabICLv2 and from $6.7\%$ to $3.7\%$ for nanoTabPFN. $\widehat{AU}_{\mathrm{JSD}}$ also reduces by at most a factor of three. The tiny models and the synthetic test set show the same pattern (\cref{fig:scaling_PFN_tiny,fig:scaling_PFN_synthetic}). Within the scanned range, additional pretraining data therefore mainly reduces the estimation error, while a small but persistent share of the disagreement remains.

\fakeparagraph{Capacity Axis}
As shown in \cref{tab:capacity} in the Appendix, disagreement also decreases with capacity, but only slowly compared to the 30--50-fold drop of $\widehat{EE}$ along the data axis. Increasing the model parameters from 1.5M to 9M reduces $\widehat{AU}_{\mathrm{JSD}}$ on TabArena by $41\%$ and $\widehat{AU}_{\mathrm{dis}}$ from $7.5\%$ to $5.9\%$.
Contexts beyond the pretraining range do not reduce the disagreement rate either, and the full-size ensembles of \cref{sec:fullscale} still gain over their members. Capacity thus likely mainly lowers the error shared by all members, consistent with a smaller approximation error, whereas the removable part on prior data changes less.

\fakeparagraph{Reconciling Results}
\cref{sec:measure_amort} shows that a PFN closely tracks the PPD it is supposed to amortize. \cref{sec:results_scaling}, on the other hand, finds disagreement among $K$ independently trained PFNs on a (small) fraction of predictions. At first glance, this might seem contradictory. In both cases, however, the differences are caused by a small fraction of test rows: the difference to the MCMC reference concentrates on a small share of inputs (\cref{app:tracktablePrior}), and so does the averaging gain between independently trained members, where little gain falls on confidently predicted test points (\cref{fig:disagreement_concentration}). The test points with the largest averaging gains are plausibly those that are rare under the prior and hence contribute little to the pretraining objective, so that more pretraining or larger models barely reduces the uncertainty for these predictions. Averaging over independently pretrained models addresses exactly this residual. Under deployment, \cref{cor:deployment_amb} adds a transport and a non-negative off-support term, leading to a higher disagreement on datasets outside the prior's support as empirically confirmed in \cref{app:beyond}.

\begin{figure*}[t]
    \centering
    \includegraphics[width=1\linewidth]{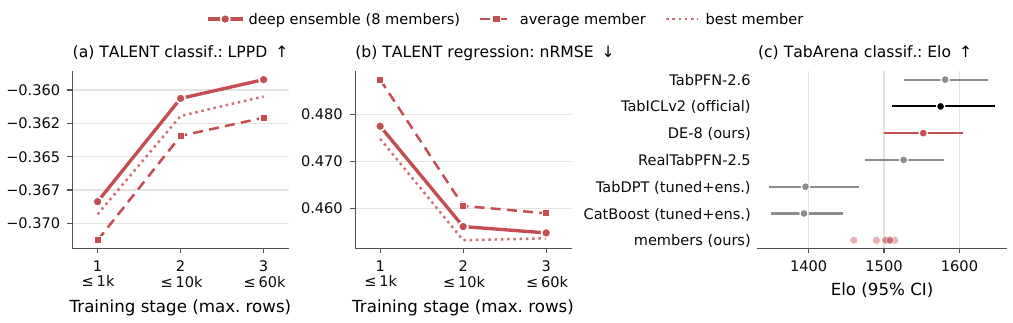}
    \caption{{Deep ensembles of full-size TabICLv2. \textbf{(a, b)} LPPD and nRMSE on TALENT datasets after each training stage: ensemble of 8 members, average member and best member (selected after the fact). The gap between ensemble and average member persists while all members improve. \textbf{(c)} TabArena classification Elo (95\% CI) of the single members (dots), the 8-member ensemble, the official TabICLv2 checkpoint and further published methods. Note that the individual DE members are trained for only 27.5\% of the steps that official TabICLv2 uses. }}
    \label{fig:FullTabICL}
\end{figure*}

\subsection{Bayesian Model Averages of PFNs}\label{sec:results_bde}
We approximate tempered posteriors of the form (\ref{eq:tempered}), with temperature and prior set per method, using three families of methods that differ in which part of the posterior they explore. All yield parameter samples that are averaged via \cref{eq:bde_pred}; implementation details and hyperparameters are given in \cref{app:bma_setup}.

\fakeparagraph{Deep Ensembles}
{Deep ensembles \citep[DEs;][]{Lakshminarayanan2017} train $K$ networks independently. These act as draws from distinct modes of $p_\beta$ and thus capture uncertainty \emph{between} basins \citep{FortLossLandscape}.} Deep ensembles improve on their average member in ACC, ROC AUC and LPPD at every model size and context size (\cref{fig:DE-BDE-scaling}), by $1.7$--$3.1\%$ LPPD within and $1.8$--$2.7\%$ beyond the pretraining range (\cref{tab:ivon_vs_de}). At 1000 rows, an ensemble of one size roughly matches a single member of the next larger size.

\fakeparagraph{Within-Basin Methods}
SG-MCMC and variational learning, instead, are more likely to capture uncertainty \emph{within} a basin. We run the samplers adaSGHMC \citep{ChenSGHMC,adaSGHMC} and pSMILE \citep{EmanuelSMILE}, starting chains from each DE member \citep{EmanuelConnecting}, which forms a Bayesian deep ensemble \citep{EmanuelMILE}. We further train IVON \citep{shen2024ivon}, which fits a diagonal Gaussian to $p_\beta$. Both within-basin methods recover part of the removable error, but less compared to the DE. IVON improves on its posterior mean within the pretraining range at every size, by $0.2$--$2.5\%$, shrinking with capacity and saturating by $S\approx8$ (\cref{app:IVON}). Beyond it, no number of samples helps (\cref{tab:ivon_scaling}). SG-MCMC chains (pSMILE) warm-started from the DE add up to $0.42\%$ on top of it at both context sizes. adaSGHMC fails to provide reliable additional gains despite an extensive hyperparameter search (\cref{app:SGMCMC}). Within-basin methods therefore can capture residual amortization uncertainty, but a plain DE delivers most of the attainable gain at matched inference cost, needs no tempering or sampler hyperparameters, and its members train independently.

\subsection{Deep Ensembles of Full-Size TabICLv2}\label{sec:fullscale}
Since DEs carry most of the gain, we test whether these findings persist at the scale of current state-of-the-art PFNs. We pretrain $K{=}8$ full-size TabICLv2 models ($27.5$M parameters) each for classification and regression, with the three-stage curriculum and preprocessing pipeline of \citet{tabicl2} but $27.5\%$ of its training steps (details and extended results in \cref{app:fullDE}). After every training stage, the ensemble improves on its average member, and it does so on all 38 classification and 13 regression datasets of TabArena. For classification, it is also better than the best single member. The same also holds true for evaluation on TALENT (see \cref{tab:full_de_stages}). Giving each member different preprocessing configurations leaves the ensemble unchanged, so the gain stems from the independently pretrained weights. With a TabArena classification Elo of $1552$, the ensemble lies within the confidence interval of the official TabICLv2 checkpoint ($1575$), while its members never communicate and can be trained in parallel in about $0.37\times$ the official wall-clock time.

\section{Discussion}\label{sec:discussion}
PFNs amortize the PPD of their prior remarkably well
up to small deviations, which are confined to inputs that are rare under the prior.
For production priors without a reference, only the removable part of the amortization error is observable, and it shows the same picture: While the estimation error vanishes at the expected rate, a small but consistent disagreement between independently pretrained PFNs remains, concentrated on few inputs and barely reduced by more pretraining data or capacity. Model averaging addresses exactly this remaining uncertainty, and deep ensembles of PFNs deliver most of the attainable gain, while within-basin methods like SG-MCMC and variational inference provide little additional benefit. At full scale, an ensemble of independently pretrained TabICLv2 models matches the official checkpoint at a fraction of its wall-clock time, with the deep ensemble outperforming the best post-hoc selected single model on TabArena as well as TALENT. For practitioners, this suggests a simple recipe: pretraining several members in parallel instead of a single model yields better predictions and makes the amortization error partially measurable.

Our analysis has limitations. The amortization gap is measured against finite-budget MCMC, not the exact PPD, and some studies use smaller models. Furthermore, while we expect our results to hold across the whole range of state-of-the-art PFN architectures, we empirically validate our claims for TabICLv2 and nanoTabPFN only. Next steps are studying adjustments to the pretraining pipelines to improve the amortization at the tails of the prior as well as investigating how to integrate the disagreement metrics as uncertainty signals at deployment.

\subsection*{AI Use Statement}
In this work, we used generative AI tools to support the implementation of the presented methods in code and to draft and review mathematical proofs.
We have not used generative AI tools to formulate research hypotheses and develop the theoretical and conceptual framework. Furthermore, other required disclosures such as data cleaning or data analysis are not applicable to this work.
Additionally, we used generative AI tools to assist in the writing and presentation of this manuscript by improving readability and the presentation of the included figures. Furthermore, generative AI tools were used to explore and identify relevant literature in addition to our initial manual survey. We have reviewed all AI-assisted work. AI-generated code was reviewed manually as well as tested in detail. All modifications of the manuscript suggested by AI tools were carefully inspected. We take responsibility for the final content of this work, including text, claims or artifacts produced with the aid of generative AI.

\subsection*{Acknowledgements}
This work was supported by the Bavarian AI Foundation Model Initiative (AI-Bay), funded by the Bavarian State Ministry of Science and the Arts. The authors gratefully acknowledge the computational resources and support provided by the Erlangen National High Performance Computing Center (NHR@FAU) at Friedrich-Alexander-Universität Erlangen-Nürnberg (FAU). Furthermore, the authors gratefully acknowledge the scientific support and resources of the AI service infrastructure LRZ AI Systems provided by the Leibniz Supercomputing Centre (LRZ) of the Bavarian Academy of Sciences and Humanities (BAdW), funded by the Bayerisches Staatsministerium für Wissenschaft und Kunst (StMWK), primarily through the BayernKI initiative.

\bibliography{references}

\clearpage

\clearpage
\appendix
\crefalias{section}{appendix}
\crefalias{subsection}{appendix}
\crefalias{subsubsection}{appendix}
\thispagestyle{empty}
\onecolumn
\aistatstitle{How Well Do Prior-Data Fitted Networks Amortize \\ Bayesian Inference? \\ Supplementary Materials}

\section{Related Work}\label{app:related_work}
\paragraph{Prior-Data Fitted Networks} \citet{TabPFN1_2021} introduced PFNs as transformers trained on synthetic data to approximate Bayesian posterior predictive distributions via in-context learning, with \citet{NaglerPFNs} providing statistical foundations by showing that PFNs admit a frequentist ERM interpretation. This line of work has since catalysed a rapidly growing family of tabular foundation models \citep{tabPFN3_2025,tabPFN4_2025,realtabPFN,Kong2026,pfeiffer2026tabh2o,jager2026tabpfn,team2026xiaomi,eo2026exaone,Ma2024,zhang2025limix,zhang2025mitra}. Importantly, TabICL \citep{tabicl1, tabicl2} provides a fully open-source alternative with strong scaling to large datasets. Taken together, these works establish in-context learning over synthetic priors as the dominant paradigm for tabular foundation models.

\paragraph{Uncertainty in PFNs} Recent works address uncertainty in PFN predictions at inference time. \Citet{NaglerMartingale} construct Bayesian posteriors via martingale posterior sampling; \citet{FortiniDecomposition,ng2025tabmgp} prove an aleatoric/epistemic decomposition via a predictive CLT. \Citet{FortiniDecomposition} additionally argue that weight-space approaches to PFN uncertainty constitute a ``category error,'' since they concern the transformer's meta-parameters rather than the task at hand. We believe this characterization conflates two orthogonal levels of uncertainty: Context uncertainty is conditional on a fixed predictor, while the amortization error, and in particular its removable part (\cref{cor:bias_variance}), concerns the predictor itself, which stochastic pretraining makes random. Context-level methods cannot register cross-warmstart disagreement as uncertainty, yet our results will show this signal is measurable, persistent, and task-relevant. Closely related to the PFN literature is the stream of work focusing on \textit{simulation-based inference} \citep[SBI;][]{cranmer2020frontier}, which has also focused on the uncertainty in its approaches. Notably, the simulation gap problem, that is, prior misspecification degrading amortized approximations \citep{schmitt2023detecting},  corresponds to the prior error in our decomposition and is discussed in \cref{sec:decomp_method}.

\paragraph{Bayesian Deep Learning} Bayesian deep learning maintains distributions over network weights for uncertainty quantification \citep{OpinionBDL}. We build on SG-MCMC methods such as SGHMC \citep{ChenSGHMC, adaSGHMC} and the recent MILE/SMILE samplers \citep{EmanuelMILE, EmanuelSMILE}, as well as deep ensembles \citep{Lakshminarayanan2017}, which \citet{FortLossLandscape} attribute to exploration of distinct loss basins and \citet{DAngeloRepulsive} connect to a Wasserstein gradient-flow posterior. Variational learning with IVON \citep{shen2024ivon} scales natural-gradient variational inference to large networks. All three are instantiated here over PFN weights, targeting a tempered posterior \citep{wenzelColdPosterior} to prevent collapse at large meta-training scale $M$.

\paragraph{Are PFNs and In-Context Learners Bayesian?}
Whether trained in-context learners implement the Bayes rule of their pretraining distribution has been studied mostly outside tabular data. \citet{RaventosPretrainingRegression,goddard2025can} show for linear regression that transformers pretrained on few tasks match the Bayes rule of that finite task set, and above a task-diversity threshold instead match the Bayes rule of the wider task family, i.e., they depart from the pretraining prior. \citet{Falck2024IsPerspective} test the martingale property that every Bayesian predictive satisfies and find violations in large language models. For PFNs, \citet{TabPFN1_2021} compare against exact or sampling-based references for Gaussian-process and small Bayesian-neural-network priors. The concrete evidence that PFNs track the posterior predictive they are supposed to amortize comes mainly from \citet{TabPFN1_2021,muller2024bayes}. Unfortunately, the released code of \citet{TabPFN1_2021} covers only part of their experiments and, to our understanding, contains inconsistencies that narrow what some of the experiments show, which is one of the reasons why we revisit and extend their experiments in this work.

\paragraph{Evaluating Synthetic Priors}
Several recent works try to score synthetic priors without pretraining, by relating distances between synthetic and real datasets in (summary) dataset space to downstream performance \citep{Kaya2026TowardsModels, DaviesMindModels, ZhaoFromModelsb}, with limited success. In the terms of \cref{prop:decomp}, they target the prior error, where the decomposition makes explicit that it consists of a coverage term, a transport term, and an off-support term, which respond to different interventions.

\section{Proofs}\label{app:proofs}
Throughout, $\Pi$ and $\Pi'$ are laws over pairs $(p,n)$ of a task and a context size, and $\Dn$ denotes a context of realized size $n$. Quantities indexed by a task, such as $\mathcal{E}_p$, $\mathcal{E}^\star_p$ and $a_p$ below, are evaluated at the pair drawn with $p$.
We collect the standing assumptions.
\begin{figure}[h!]
    \centering
    \tikzset{
        box/.style={draw, rounded corners=2pt, align=left, anchor=west, inner sep=4pt,
                    line width=0.6pt, minimum height=10mm},
        root/.style={box, draw=black!70, fill=black!4, text width=3.8cm},
        lvl/.style={box, text width=3.6cm},
        leaf/.style={box, line width=0.45pt, text width=5.0cm},
        prior/.style={draw=red!55!black, fill=red!7},
        amort/.style={draw=blue!60!black, fill=blue!7},
        ctx/.style={draw=green!40!black, fill=green!7},
        neutral/.style={draw=black!70, fill=black!4},
        lprior/.style={draw=red!55!black, fill=red!2},
        lamort/.style={draw=blue!60!black, fill=blue!2},
        lshared/.style={draw=blue!60!black, fill=black!3, dashed},
        link/.style={draw=black!50, line width=0.55pt, rounded corners=3pt},
        weight/.style={font=\footnotesize, text=black!60, above, inner sep=2pt},
        panel/.style={font=\small\bfseries, anchor=north west},
        mtag/.style={font=\scriptsize, text=black!55},
    }
    \begin{tikzpicture}[font=\small]
    \node[panel] at (-0.1,4.75) {(a) Error of a single PFN};
    \node[root] (root) at (0,-0.9)
        {\textbf{Deployment error}\\[2pt] $R_{\Pi'}(q_{\bth})-H_{\Pi'}$};

    \node[lvl, prior] (prior) at (5.35,2.65) {\textbf{Prior error}};
    \node[lvl, amort] (amort) at (5.35,-1.4)
        {\textbf{Amortization error}\\[2pt] $\mathcal{A}_\Pi(q_{\bth})$\\[1pt]{\scriptsize\color{black!55}vs.\ reference PPD (Sec.~\ref{sec:measure_amort})}};
    \node[lvl, ctx]   (ctx)   at (5.35,-4.25)
        {\textbf{Context error}\\[2pt] $\mathcal{C}_\Pi$};

    \node[leaf, lprior] (cov) at (10.45,3.85)
        {\textbf{Coverage}\\[2pt] $\Pcov_{\Pi'}(\Pi)$};
    \node[leaf, lprior] (tra) at (10.45,2.65)
        {\textbf{Transport}\\[2pt] $\lambda\,\Gamma_{\Pi'}(q_{\bth})$};
    \node[leaf, lprior] (off) at (10.45,1.45)
        {\textbf{Off-support}\\[2pt] $(1-\lambda)\,\mathcal{O}_{\Pi'}(q_{\bth})$};

    \node[leaf, lamort] (opt) at (10.45,0.0)
        {\textbf{Optimization}\\[2pt] $R_\Pi(q_{\hat\bth})-R_\Pi(q_{\tilde\bth_M})$};
    \node[leaf, lamort] (est) at (10.45,-1.4)
        {\textbf{Estimation}\\[2pt] $R_\Pi(q_{\tilde\bth_M})-R_\Pi(q_{\bth^*})$\\[1pt]{\scriptsize\color{black!55}$\widehat{EE}(M)$ (Sec.~\ref{sec:results_scaling})}};
    \node[leaf, lamort] (app) at (10.45,-2.8)
        {\textbf{Approximation}\\[2pt] $R_\Pi(q_{\bth^*})-R_\Pi(\pi_\Pi)$};

    \path[link] (root.east) -- ++(0.5,0) |- (prior.west);
    \path[link] (root.east) -- ++(0.5,0) |- node[weight, pos=0.8] {$\times\,\lambda$} (amort.west);
    \path[link] (root.east) -- ++(0.5,0) |- node[weight, pos=0.8] {$\times\,\lambda$} (ctx.west);
    \foreach \t in {cov,tra,off}
        \path[link] (prior.east) -- ++(0.5,0) |- (\t.west);
    \foreach \t in {opt,est,app}
        \path[link] (amort.east) -- ++(0.5,0) |- (\t.west);
    \end{tikzpicture}

    \vspace{10pt}

    \begin{tikzpicture}[font=\small]
    \node[panel] at (-0.1,3.6) {(b) Averaging $K$ PFNs $q_1,\dots,q_K$ into their mixture $\bar q$};
    \node[root] (avg) at (0,0.15)
        {\textbf{Average member error}\\[2pt] $\tfrac1K\textstyle\sum_k R_{\Pi'}(q_k)-H_{\Pi'}$};

    \node[lvl, neutral] (ens)  at (5.35,2.3)
        {\textbf{Ensemble error}\\[2pt] $R_{\Pi'}(\bar q)-H_{\Pi'}$};
    \node[lvl, neutral] (gain) at (5.35,-0.6)
        {\textbf{Ensemble gain}\\[2pt] $\E_{\Pi'}[a_p]$\\[1pt]{\scriptsize\color{black!55}LPPD gain; {label-free: $\widehat{AU}_{\mathrm{JSD}}$}}};

    \node[leaf, lshared] (shr) at (10.45,2.3)
        {\textbf{Shared amortization}\\[2pt] $\mathcal{A}_\Pi(\bar q)$, decomposed as in (a)\\[1pt]{\scriptsize\color{black!55}not removed by averaging}};
    \node[leaf, lamort] (rem) at (10.45,0.6)
        {\textbf{Removable amortization}\\[2pt] $\lambda\,\E_\Pi\big[{\Delta_{\pi_\Pi}}\big]\ \ge0$};
    \node[leaf, lprior] (tra2) at (10.45,-0.6)
        {\textbf{Transport}\\[2pt] $\lambda\,\Cov_\Pi(\rt,a_p)$};
    \node[leaf, lprior] (off2) at (10.45,-1.8)
        {\textbf{Off-support}\\[2pt] $(1-\lambda)\,\E_{\Pi'_\perp}[a_p]\ \ge0$};

    \path[link] (avg.east) -- ++(0.5,0) |- (ens.west);
    \path[link] (avg.east) -- ++(0.5,0) |- (gain.west);
    \path[link] (ens.east) -- (shr.west);
    \foreach \t in {rem,tra2,off2}
        \path[link] (gain.east) -- ++(0.5,0) |- (\t.west);
    \end{tikzpicture}
    \caption{Error decompositions used in this paper. (a) The deployment error of a PFN splits into prior, amortization and context error (\cref{prop:decomp}); the latter two enter with the expressible share $\lambda$, and the prior error comprises the coverage, transport and off-support terms. For the trained network $q_{\hat\bth}$, the amortization error splits further into optimization, estimation and approximation error (\cref{eq:decomp_eq}). (b) For $K$ PFNs, the error of the average member splits into the error of their mixture $\bar q$, whose amortization error is the part shared by all members (\cref{cor:bias_variance}), and the gain of averaging, which splits into the removable amortization error and the transport and off-support terms of \cref{cor:deployment_amb}.}
    \label{fig:decomposition_tree}
\end{figure}

\begin{assumption}[Finiteness]\label{as:finite} All cross-entropies appearing below are finite.\end{assumption}
\begin{assumption}[Projection]\label{as:proj} For every task $p$, the I-projection $p^\star\in\argmin_{\tilde p\in\Ssp}\E_p[\KL(p(\cdot\mid\bX)\,\|\,\tilde p(\cdot\mid\bX))]$ exists and depends measurably on $p$, with $p^\star=p$ for $p\in\Ssp$.\end{assumption}
\begin{assumption}[Support]\label{as:supp} $\Pi(\Ssp\times\mathbb{N})=1$ for $\Ssp$ the support of the task marginal of $\Pi$.\end{assumption}
{For a prior $\Pi'$ over pairs, the joint law of $(p,N,\Dn,\bX,Y)$ is $d\Pi'(p,n)\,p^{\otimes n}(d\Dn)\,p_{\bX}(d\bX)\,p(Y\mid\bX)\,dY$;} we write $\bar m_{\Pi'}$ for the induced law of the observables $(\bX,\Dn)$. We use three elementary facts.
\begin{enumerate}[label=(F\arabic*), leftmargin=2.5em]
  \item\label{f:label} \emph{Label laws.} Given $(p,\bX,\Dn)$, the label has density $p(\cdot\mid\bX)$. Given $(\bX,\Dn)$, it has density $\pi_{\Pi'}(\cdot\mid\bX,\Dn)$: by the tower property over $p$ and Tonelli's theorem, $\mathbb{P}(Y\in C\mid\bX,\Dn)=\int\!\int_C p(y\mid\bX)\,dy\,d\Pi'(p\mid\bX,\Dn)=\int_C\pi_{\Pi'}(y\mid\bX,\Dn)\,dy$ for measurable $C\subseteq\mathcal{Y}$.
  \item\label{f:kl} \emph{Log-loss gaps are expected divergences.} If $Y$ has density $u(\cdot\mid Z)$ given $Z$ and $\log u(Y\mid Z)$ is integrable, then for every predictive density $w$, conditioning on $Z$ gives
  \[
  \E\big[\log u(Y\mid Z)-\log w(Y\mid Z)\big]=\E\big[\KL\big(u(\cdot\mid Z)\,\|\,w(\cdot\mid Z)\big)\big]\in[0,\infty],
  \]
  which vanishes if and only if $w(\cdot\mid Z)=u(\cdot\mid Z)$ almost surely.
  \item\label{f:com} \emph{Change of measure.} If $\nu\ll\mu$ are probability measures with density $r$, and $f$ and $rf$ are $\mu$-integrable, then $\E_\nu[f]=\E_\mu[rf]=\E_\mu[f]+\Cov_\mu(r,f)$, since $\E_\mu[r]=1$.
\end{enumerate}

\subsection{Proof of \cref{prop:bayes}}\label{app:proof-bayes}
\begin{proof}
By \ref{f:label}, $Y$ has density $\pi_{\Pi'}(\cdot\mid\bX,\mathcal D_N)$ given $(\bX,\mathcal D_N)$, and $\log\pi_{\Pi'}(Y\mid\bX,\mathcal D_N)$ is integrable because $R_{\Pi'}(\pi_{\Pi'})$ is finite. Applying \ref{f:kl} with $Z=(\bX,\Dn)$, $u=\pi_{\Pi'}$ and $w=q$ gives
\[
R_{\Pi'}(q)-R_{\Pi'}(\pi_{\Pi'})=\E_{\Pi'}\big[\log\pi_{\Pi'}(Y\mid\bX,\mathcal D_N)-\log q(Y\mid\bX,\mathcal D_N)\big]=\E_{\Pi'}\big[\KL(\pi_{\Pi'}\,\|\,q)\big]\ge0,
\]
with equality if and only if $q=\pi_{\Pi'}$ for $\bar m_{\Pi'}$-almost every $(\bX,\Dn)$. Hence $R_{\Pi'}(q)\ge R_{\Pi'}(\pi_{\Pi'})$ for every predictor $q$, and any $q$ attaining this bound equals $\pi_{\Pi'}$ $\bar m_{\Pi'}$-almost surely, i.e., $\pi_{\Pi'}$ is the minimizer of $R_{\Pi'}$, unique up to $\bar m_{\Pi'}$-null sets.
\end{proof}

\subsection{Formal Statement and Proof of \cref{prop:decomp}}\label{app:proof-decomp}
For a ground-truth task $p$, let $H_p:=\E_p[-\log p(Y\mid\bX)]$ be its conditional entropy and
\begin{equation}\label{eq:excess}
  \mathcal{E}_p(q_{\bth}):=\E_p\big[\KL\big(p(\cdot\mid\bX)\,\big\|\,q_{\bth}(\cdot\mid\bX,\Dn)\big)\big]
\end{equation}
its excess risk, so that $R_{\Pi'}(q_{\bth})-H_{\Pi'}=\E_{\Pi'}[\mathcal{E}_p(q_{\bth})]$ with $H_{\Pi'}:=\E_{\Pi'}[H_p]$ by \ref{f:label} and \ref{f:kl}. {Let $\Ssp$ be the support of the task marginal of $\Pi$ (\cref{as:supp})} and let $p^\star$ be the \emph{I-projection} of $p$ onto $\Ssp$ (\cref{as:proj}), the task in the prior's support closest to $p$ in expected conditional KL. It defines the \emph{coverage} error and the signed excess risk relative to $p^\star$,
\begin{align}
  \Pcov_{\Pi'}(\Pi)&:=\E_{\Pi'}\big[\KL\big(p(\cdot\mid\bX)\,\|\,p^\star(\cdot\mid\bX)\big)\big] \ge0, \label{eq:coverage}\\
  \mathcal{E}^\star_p(q_{\bth})&:=\E_p\big[\log p^\star(Y\mid\bX)-\log q_{\bth}(Y\mid\bX,\Dn)\big]. \label{eq:excess-star}
\end{align}

\begin{proposition}[Deployment decomposition, formal version of \cref{prop:decomp}]\label{prop:decomp_formal}
Under \cref{as:finite,as:proj,as:supp}, and if $\rt\,\mathcal{E}_p(q)$ is $\Pi$-integrable, \cref{eq:decomp} holds for every PFN $q$, where $\mathcal{A}_\Pi(q)$ is the amortization error \cref{eq:amort}, $\Pcov_{\Pi'}(\Pi)$ the coverage error \cref{eq:coverage},
\begin{equation}\label{eq:transport-offsupport}
  \Gamma_{\Pi'}(q):=\Cov_\Pi\big(\rt,\mathcal{E}_p(q)\big),\qquad
  \mathcal{O}_{\Pi'}(q):=\E_{\Pi'_\perp}\big[\mathcal{E}^\star_p(q)\big]
\end{equation}
the transport and off-support terms, and
\begin{equation}\label{eq:controlled-split}
  \mathcal{C}_\Pi := R_\Pi(\pi_\Pi)-H_\Pi=\E_\Pi\big[\KL\big(p(\cdot\mid\bX)\,\|\,\pi_\Pi(\cdot\mid\bX,\mathcal D_N)\big)\big]\ \ge0
\end{equation}
the context error.
\end{proposition}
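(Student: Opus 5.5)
The plan is to write the deployment excess risk as a single expectation of a per-task quantity and then split that expectation along the Lebesgue decomposition \cref{eq:lebesgue}. By \ref{f:label} and \ref{f:kl}, applied conditionally on $(p,\bX,\Dn)$ with $u=p(\cdot\mid\bX)$, we have $R_{\Pi'}(q)-H_{\Pi'}=\E_{\Pi'}[\mathcal{E}_p(q)]$; \cref{as:finite} makes this subtraction legitimate. Linearity of the integral in the measure then gives
\[
\E_{\Pi'}[\mathcal{E}_p(q)]=\lambda\,\E_{\tilde\Pi}[\mathcal{E}_p(q)]+(1-\lambda)\,\E_{\Pi'_\perp}[\mathcal{E}_p(q)].
\]
The edge cases $\lambda\in\{0,1\}$ are handled by convention: the bracket with weight zero is simply dropped. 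The rest of the proof treats the two pieces separately and then collects terms.

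\textbf{Absolutely continuous part.} Since $\tilde\Pi\ll\Pi$ with density $\rt$, and $\rt\,\mathcal{E}_p(q)$ is $\Pi$-integrable by hypothesis, \ref{f:com} gives
\[
\E_{\tilde\Pi}[\mathcal{E}_p(q)]=\E_\Pi[\mathcal{E}_p(q)]+\Cov_\Pi(\rt,\mathcal{E}_p(q)).
\]
The covariance term is $\Gamma_{\Pi'}(q)$ by definition. For the first term, note that $\E_\Pi[\mathcal{E}_p(q)]=R_\Pi(q)-H_\Pi$, using the same identity as above with $\Pi'=\Pi$. Adding and subtracting $R_\Pi(\pi_\Pi)$ splits it as $[R_\Pi(q)-R_\Pi(\pi_\Pi)]+[R_\Pi(\pi_\Pi)-H_\Pi]$. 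The first bracket is $\mathcal{A}_\Pi(q)$, which is non-negative by \cref{prop:bayes}. The second bracket is $\mathcal{C}_\Pi$; its KL representation \cref{eq:controlled-split} and its non-negativity follow from \ref{f:kl} with $w=\pi_\Pi$, conditioning on $(p,\bX,\Dn)$.

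\textbf{Coverage and the singular part.} For any task, add and subtract $\log p^\star(Y\mid\bX)$ inside $\mathcal{E}_p(q)$. This gives
\[
\mathcal{E}_p(q)=\E_p\big[\KL(p(\cdot\mid\bX)\,\|\,p^\star(\cdot\mid\bX))\big]+\mathcal{E}^\star_p(q),
\]
with all terms finite by \cref{as:finite}. On the absolutely continuous part the first summand vanishes: $\tilde\Pi\ll\Pi$ and \cref{as:supp} give $\tilde\Pi(\Ssp\times\mathbb{N})=1$, and \cref{as:proj} gives $p^\star=p$ there. Consequently
\[
\Pcov_{\Pi'}(\Pi)=(1-\lambda)\,\E_{\Pi'_\perp}\big[\E_p[\KL(p\,\|\,p^\star)]\big].
\]
Integrating the identity above against $\Pi'_\perp$ therefore yields $(1-\lambda)\E_{\Pi'_\perp}[\mathcal{E}_p(q)]=\Pcov_{\Pi'}(\Pi)+(1-\lambda)\mathcal{O}_{\Pi'}(q)$. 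Collecting the absolutely continuous and singular pieces gives exactly \cref{eq:decomp}, with coverage, amortization and context error non-negative as claimed.

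\textbf{Main obstacle.} I expect the only delicate step to be the measure-theoretic bookkeeping rather than any single identity. One must make sure that $\tilde\Pi$ and $\Pi'_\perp$ are laws on pairs $(p,n)$, so that the support statement transfers from $\Pi$ to $\tilde\Pi$. One must also check that every split of an expectation into a difference involves finite quantities, which is where \cref{as:finite} and the integrability of $\rt\,\mathcal{E}_p(q)$ enter. Finally, measurability of $p\mapsto p^\star$, supplied by \cref{as:proj}, is what allows the coverage and off-support integrals to be defined at all.
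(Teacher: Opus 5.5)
Your proposal is correct and follows the paper's proof. It uses the same ingredients: the telescoping identity $\mathcal{E}_p(q)=\E_p[\KL(p\,\|\,p^\star)]+\mathcal{E}^\star_p(q)$, the vanishing of the coverage summand on $\tilde\Pi$-almost every pair via \cref{as:supp,as:proj}, the change of measure \ref{f:com} for the transport term, and the split $R_\Pi(q)-H_\Pi=\mathcal{A}_\Pi(q)+\mathcal{C}_\Pi$. The only difference is that you split along the Lebesgue decomposition before telescoping on the singular part, whereas the paper telescopes for every task and then integrates, and this reordering is immaterial.
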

Before moving to the proof, the following paragraph highlights the intuition and relevance of each of the five terms in \cref{prop:decomp}:
\begin{itemize}[leftmargin=1.2em, nosep]
  \item \textbf{Coverage} $\Pcov_{\Pi'}(\Pi)$ depends on $\Pi$ and $\Pi'$ only: it is the misspecification of the prior, measured by the KL divergence to the best expressible task. Importantly, it only compares the conditionals of $Y$ given $X$, not the covariate law alone. Hence, a task whose covariate law the prior never generates, but whose conditional it does, incurs no coverage error.
  \item \textbf{Amortization} $\mathcal{A}_\Pi(q)$ is the only term the pretraining objective targets: it vanishes if and only if $q=\pi_\Pi$ (\cref{prop:bayes}). $\Gamma_{\Pi'}$ and $\mathcal{O}_{\Pi'}$ also depend on $q$, but they are not controlled by the objective as they also depend on the deployment distribution, which is unknown at training time. It is evaluated under $\Pi$, not $\tilde\Pi$, the reweighting towards deployment is accounted for in $\Gamma_{\Pi'}$.
  \item \textbf{Context} $\mathcal{C}_\Pi$: by \ref{f:label}, $R_\Pi(\pi_\Pi)$ and $H_\Pi$ are the conditional entropies of $Y$ given $(\bX,\mathcal D_N)$ and given $(p,\bX,\mathcal D_N)$, so $\mathcal{C}_\Pi=I(Y;p\mid\bX,\mathcal D_N)$, the information the unknown task still carries about the label. It is a property of $\Pi$ alone and does not depend on the PFN $q$. Its decay in the context size $n$ measures how quickly the prior's tasks are identified from data, i.e., how hard the prior is independently of the architecture.
  \item \textbf{Transport} {$\Gamma_{\Pi'}(q)$ measures whether deployment favours tasks or context sizes on which $q$ does well or poorly. If $q$ is equally good on all pairs,} it vanishes, however strongly $\Pi'$ reweights the prior. It is generally non-zero even for the exact PPD $q=\pi_\Pi$, since some tasks are harder to identify from $n$ points than others.
  \item \textbf{Off-support} $\mathcal{O}_{\Pi'}(q)$ {compares $q$ with the best expressible task $p^\star$ on pairs to which the prior assigns no mass, i.e., inexpressible tasks or context sizes outside the pretraining range.} Where their observables $(\bX,\Dn)$ fall outside the support of $\bar m_\Pi$, $\pi_\Pi$ is not determined and the pretraining objective does not constrain $q$, so the term is governed by the inductive bias of the architecture. $\mathcal{O}_{\Pi'}(q)<0$ means the network generalizes beyond its prior, as observed for in-context learners pretrained on sufficiently diverse tasks \citep{RaventosPretrainingRegression,goddard2025can}.
\end{itemize}

\begin{proof}
Let $c_p:=\E_p[\KL(p(\cdot\mid\bX)\,\|\,p^\star(\cdot\mid\bX))]$, so $\Pcov_{\Pi'}(\Pi)=\E_{\Pi'}[c_p]$. Splitting $\log p-\log q=(\log p-\log p^\star)+(\log p^\star-\log q)$ in \cref{eq:excess} gives, for every task $p$,
\begin{equation}\label{eq:telescope}
  \mathcal{E}_p(q)=c_p+\mathcal{E}^\star_p(q),
\end{equation}
with all terms finite by \cref{as:finite}. {Since $\tilde\Pi\ll\Pi$ and $\Pi(\Ssp\times\mathbb{N})=1$, $\tilde\Pi$-almost every pair $(p,n)$ has $p\in\Ssp$ and thus} has $p^\star=p$, $c_p=0$ and $\mathcal{E}^\star_p=\mathcal{E}_p$ (\cref{as:proj}). Hence $\Pcov_{\Pi'}(\Pi)=(1-\lambda)\,\E_{\Pi'_\perp}[c_p]$, and integrating \cref{eq:telescope} against \cref{eq:lebesgue} gives
\[
  R_{\Pi'}(q)-H_{\Pi'}=\lambda\,\E_{\tilde\Pi}[\mathcal{E}_p(q)]+\Pcov_{\Pi'}(\Pi)+(1-\lambda)\,\mathcal{O}_{\Pi'}(q).
\]
By \ref{f:com}, $\E_{\tilde\Pi}[\mathcal{E}_p(q)]=\E_\Pi[\mathcal{E}_p(q)]+\Gamma_{\Pi'}(q)$, and
\[
  \E_\Pi[\mathcal{E}_p(q)]=R_\Pi(q)-H_\Pi=\big[R_\Pi(q)-R_\Pi(\pi_\Pi)\big]+\big[R_\Pi(\pi_\Pi)-H_\Pi\big]=\mathcal{A}_\Pi(q)+\mathcal{C}_\Pi,
\]
which proves \cref{eq:decomp}. Finally, $\mathcal{A}_\Pi(q)\ge0$ by \cref{prop:bayes}, and \ref{f:kl} with $Z=(p,\bX,\Dn)$, $u=p(\cdot\mid\bX)$ and $w=\pi_\Pi$ gives $\mathcal{C}_\Pi=\E_\Pi[\KL(p(\cdot\mid\bX)\,\|\,\pi_\Pi(\cdot\mid\bX,\Dn))]\ge0$.
\end{proof}

\subsection{Proof of \cref{lem:ambiguity}}\label{app:proof-ambiguity}
\begin{proof}
For every $y$ with $p(y)>0$,
\[
\tfrac1K\textstyle\sum_k\big[\log p(y)-\log q_k(y)\big]-\big[\log p(y)-\log\bar q(y)\big]=\log\bar q(y)-\tfrac1K\sum_k\log q_k(y).
\]
Taking $\E_{y\sim p}$ gives \cref{eq:ambiguity}, all terms being finite by \cref{as:finite}. By strict concavity of $\log$, the right-hand side is non-negative, with equality if and only if $q_1(y)=\dots=q_K(y)$. Hence ${\Delta_p}\ge0$, with equality if and only if the $q_k$ agree $p$-almost everywhere.
\end{proof}

\subsection{Proof of \cref{cor:bias_variance}}\label{app:proof-corollary}
\begin{proof}
By \cref{prop:bayes} with $\Pi'=\Pi$, which holds for every predictor, $\mathcal{A}_\Pi(q)=\E_\Pi[\KL(\pi_\Pi\,\|\,q)]$ for each $q_k$ and for $\bar q$. Apply \cref{lem:ambiguity} at each $(\bX,\Dn)$ with $p=\pi_\Pi(\cdot\mid\bX,\Dn)$ and take $\E_\Pi$. All terms are non-negative, so the expectation splits.
\end{proof}

\subsection{Ensemble gain under the deployment distribution}\label{app:proof-deployment}
\begin{corollary}\label{cor:deployment_amb}
Let $q_1,\dots,q_K$ be PFNs with mixture $\bar q$ and, {for a pair $(p,n)$}, let $a_p:=\E_p\big[{\Delta_{p(\cdot\mid\bX)}}\big]$, where ${\Delta}$ from \cref{lem:ambiguity} is evaluated at the members' predictions $q_k(\cdot\mid\bX,\Dn)$. If all risks are finite and $\rt\,a_p$ is $\Pi$-integrable, then
\begin{align}
  \tfrac1K\textstyle\sum_{k}R_{\Pi'}(q_k)&-R_{\Pi'}(\bar q)=\E_{\Pi'}[a_p] \notag\\
  &=\lambda\Big[\underbrace{\E_\Pi\big[{\Delta_{\pi_\Pi}}\big]}_{\text{removable}}+\underbrace{\Cov_\Pi(\rt,a_p)}_{\text{transport}}\Big] \notag\\
  &\quad+\underbrace{(1-\lambda)\,\E_{\Pi'_\perp}[a_p]}_{\text{off-support}}, \label{eq:deployment_amb}
\end{align}
where the removable term is that of \cref{cor:bias_variance} and $\lambda$, $\rt$, $\Pi'_\perp$ are as in \cref{eq:lebesgue}.
\end{corollary}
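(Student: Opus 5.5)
The plan is to reduce everything to one pointwise identity and then integrate it in two different ways: once against the task-level conditional $p(\cdot\mid\bX)$, and once after averaging out the task to get the PPD.

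\textbf{Pointwise function.} Fix observables $(\bX,\Dn)$ and set
\[
g(y,\bX,\Dn):=\log\bar q(y\mid\bX,\Dn)-\tfrac1K\textstyle\sum_k\log q_k(y\mid\bX,\Dn).
\]
This is a function of the label and the observables only; it does not depend on the task. By definition, ${\Delta_{u}}=\E_{y\sim u}[g(y,\bX,\Dn)]$ for any label law $u$. So $\Delta$ is linear in the law $u$ it is evaluated under.

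\textbf{First equality.} The per-sample difference of the average member's log loss and the mixture's log loss is exactly $g(Y,\bX,\Dn)$. Hence
\[
\tfrac1K\textstyle\sum_kR_{\Pi'}(q_k)-R_{\Pi'}(\bar q)=\E_{\Pi'}[g(Y,\bX,\mathcal D_N)],
\]
where subtracting the risks is legitimate because all risks are finite. Conditioning on $(p,\bX,\Dn)$ and using \ref{f:label}, the label has density $p(\cdot\mid\bX)$. The inner expectation is therefore ${\Delta_{p(\cdot\mid\bX)}}$, which is non-negative by \cref{lem:ambiguity}. Integrating over $(\bX,\Dn)\sim p$ gives $a_p$, and then integrating over $\Pi'$ gives $\E_{\Pi'}[a_p]$.

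\textbf{Second equality.} Because $a_p\ge0$, its integrals are well defined. I split $\E_{\Pi'}[a_p]$ along the Lebesgue decomposition \cref{eq:lebesgue}:
\[
\E_{\Pi'}[a_p]=\lambda\,\E_{\tilde\Pi}[a_p]+(1-\lambda)\,\E_{\Pi'_\perp}[a_p].
\]
The last summand is the off-support term, and it is non-negative since $a_p\ge0$. For the absolutely continuous part, \ref{f:com} with density $\rt$ applies, using the assumed $\Pi$-integrability of $\rt a_p$ and of $a_p$ (the latter since $\E_\Pi[a_p]$ is a difference of finite risks under $\Pi$). It gives
\[
\E_{\tilde\Pi}[a_p]=\E_\Pi[a_p]+\Cov_\Pi(\rt,a_p),
\]
which produces the transport term.

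It remains to identify $\E_\Pi[a_p]$ with the removable term. Write $\E_\Pi[a_p]=\E_\Pi[g(Y,\bX,\mathcal D_N)]$ and now condition only on the observables $(\bX,\Dn)$ rather than on $(p,\bX,\Dn)$. By \ref{f:label} under $\Pi$, the label then has density $\pi_\Pi(\cdot\mid\bX,\Dn)$. By linearity of $\Delta$ in its law, the inner expectation is ${\Delta_{\pi_\Pi}}$. Taking $\E_\Pi$ yields $\E_\Pi[{\Delta_{\pi_\Pi}}]$, which is exactly the removable term of \cref{cor:bias_variance}.

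\textbf{Main obstacle.} The delicate step is justifying this second use of the tower property. The function $g(y,\cdot)$ is sign-indefinite in $y$; only its conditional expectations are non-negative. So I cannot appeal to Tonelli directly, and I need $g(Y,\bX,\mathcal D_N)$ to be $\Pi$-integrable. I would obtain this from \cref{as:finite}: $g$ is a difference of the log-losses of $\bar q$ and of the $q_k$, and each of these has finite expected value under $\Pi$ because the risks are finite. With integrability in hand, Fubini allows the two different conditionings — on $(p,\bX,\Dn)$ and on $(\bX,\Dn)$ — to give the same total expectation, which completes the argument.
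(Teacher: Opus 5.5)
Your proposal is correct and follows the paper's proof essentially step for step: the same function $g$, conditioning on $(p,\bX,\Dn)$ for the first equality, the Lebesgue split with \ref{f:com} for the transport and off-support terms, and re-conditioning on $(\bX,\Dn)$ under $\Pi$ via \ref{f:label} to identify $\E_\Pi[a_p]$ with $\E_\Pi[\Delta_{\pi_\Pi}]$. One correction to your ``main obstacle'': $g$ is not sign-indefinite, since $\log\bar q(y)\ge\tfrac1K\sum_k\log q_k(y)$ holds pointwise by concavity of $\log$ (as in the proof of \cref{lem:ambiguity}), so Tonelli already justifies both conditionings, and finiteness of the risks is needed only to write the risk difference as $\E_{\Pi'}[g]$ and to make $a_p$ $\Pi$-integrable for \ref{f:com}.
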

\begin{proof}
Let $g:=\log\bar q(Y\mid\bX,\Dn)-\frac1K\sum_k\log q_k(Y\mid\bX,\Dn)$. Since all risks are finite, $\frac1K\sum_kR_{\Pi'}(q_k)-R_{\Pi'}(\bar q)=\E_{\Pi'}[g]$. By \ref{f:label}, $\E[g\mid p,\bX,\Dn]={\Delta_{p(\cdot\mid\bX)}}$, so $\E_{\Pi'}[g]=\E_{\Pi'}[a_p]$, which is the first equality. \Cref{eq:lebesgue} and \ref{f:com} give
\[
  \E_{\Pi'}[a_p]=\lambda\big(\E_\Pi[a_p]+\Cov_\Pi(\rt,a_p)\big)+(1-\lambda)\,\E_{\Pi'_\perp}[a_p].
\]
Finally, under $\Pi$, conditioning $g$ on $(\bX,\Dn)$ instead and using \ref{f:label} gives $\E[g\mid\bX,\Dn]={\Delta_{\pi_\Pi(\cdot\mid\bX,\Dn)}}$, so $\E_\Pi[a_p]=\E_\Pi[g]=\E_\Pi[{\Delta_{\pi_\Pi}}]$.
\end{proof}

\section{A tractable synthetic prior}\label{app:tracktablePrior}

The priors used in state-of-the-art PFNs are based on structural causal models, where random nodes and edges (together with functions they compute) are drawn according to some predefined scheme that together define a data-generating process with a certain underlying causal (latent) structure. To generate the feature and target observations, noise is then passed through this graph and the resulting feature and target nodes are appended to form a dataset from the prior \citep{tabicl1, tabicl2, tabPFN4_2025}. Let $\xi$ collect the SCM parameters, i.e.\ the graph, the node functions, the response node and the assignment of nodes to features. Each $\xi$ defines a task $p_\xi$, and the posterior $\Pi(\cdot\mid\Dn)$ is over $\xi$. MCMC methods evaluate the likelihood $p(\Dn\mid\xi)$ and the prior density of $\xi$ pointwise, up to a common constant \citep{brooks2011handbook}. An SCM prior defines $p_\xi$ only implicitly, as the pushforward of the exogenous noise through the structural equations. The following four properties of current SCM priors prevent the direct evaluation using MCMC methods:

\begin{itemize}[leftmargin=1.2em]
    \item \textbf{Latent nodes.} Only a subset of the nodes enters the dataset as features and label. With latent nodes $\mathbf{U}$ and rows conditionally independent given $\xi$, the likelihood that marginalizes over these latent nodes
    has no closed form for nonlinear node functions.

    \item \textbf{Degenerate and discontinuous likelihood.} Discretizing a node into a categorical feature or a class label turns its likelihood into an indicator or an interval probability and tree-based node functions \citep{tabicl1} are piecewise constant. The likelihood is then discontinuous in $\xi$, can vanish on most of the parameter space, and has zero or undefined gradients, which poses a severe problem for gradient-guided MCMC samplers that are needed in this high-dimensional setting \citep[e.g.,][]{RobnikMCLMC}.

    \item \textbf{Dataset-level operations.} TabPFN draws its class boundaries from the realized continuous targets \citep{tabPFN3_2025}. The rows are then not conditionally independent given $\xi$, and $p(\Dn\mid\xi)\neq\prod_{i=1}^n p_\xi(\bx_i,y_i)$. TabICLv2 rejects datasets that fail its predictability filters \citep{tabicl2}, which only adds to the fact that the posterior is intractable, as one would have to account for this rejection step in the likelihood.

    \item \textbf{Variable dimension and multimodality.} The parameter $\xi$ contains the DAG, the number of nodes (2 to 32 in TabICLv2), the response node, the assignment of nodes to features and the weights of the node functions. The number of DAGs grows super-exponentially in the number of nodes. Additionally, the dimension of $\xi$ changes with the graph.
\end{itemize}

In order to still get a high-quality reference PPD via MCMC sampling, we turn to the BNN prior introduced in \citet{hollmann2023tabpfn} that we construct in the following way: A ReLU network with two hidden layers of width 8 or 16 maps $d=4$ features to a logit $f_{\bm{w}}(\bx)$. All weights and biases $\bm{w}$ are iid $\mathcal{N}(0,1)$, the standard prior choice in BNNs \citep{JuliusInterplay}, and labels follow $y\mid\bx,\bm{w}\sim\mathrm{Bernoulli}\big(\sigma(f_{\bm{w}}(\bx))\big)$. The covariate law $p_{\bX}$ is one of two:

\begin{itemize}
    \item Gaussian features: $\bX\sim\mathcal{N}(0,I_d)$.
    \item SCM features: the TabICL SCM generator draws one MLP-based or tree-based SCM per table (probabilities 0.7 and 0.3). It converts columns to categorical codes with probability 0.2, clips at four standard deviations, and standardizes over the table. We discard the SCM's own label.
\end{itemize}

For each covariate law we pretrain TabICLv2 PFNs of three sizes (small, medium and large according to \cref{tab:architectures}) for $50{,}000$ steps of 64 tables. The number of rows per table is log-uniform or uniform on $[16, 4608]$, and a uniform fraction in $[0.1, 0.9]$ of the rows forms the context. This induces the size distribution $\Psi$. The number of features $d$ is uniform on $\{1,\dots,8\}$, zero-padded to 8. Every context size $n\le\lfloor 0.9 \cdot 4608 \rfloor = 4147$ therefore lies in $\supp\Psi$. With SCM features we pretrain only the smallest and largest PFN, both with uniform row counts.

We evaluate the PFNs on 512 test tasks per hidden width at context sizes $n \in \{4, 8, \dots, 4096\}$. The test tasks are drawn independently of pretraining.
\begin{itemize}
    \item Gaussian features, nested design. Each task has a pool of 4096 rows. The context $\Dn$ is the first $n$ rows of the pool, and all $n$ share the same 128 queries, so differences across $n$ reflect only the added rows.
    \item SCM features, independent design. The first $n$ rows of a processed table are not distributed like a pretraining table, because standardization and categorical conversion use the whole table. We therefore draw one independent table per task and $n$, with $\bm{w}$ shared across $n$. Each table has $n$ context rows and $Q_n = \min(9n, \max(128, \lceil n/9 \rceil))$ queries, so every evaluated table is an exact pretraining draw.
\end{itemize}

As a reference $\hat\pi_\Pi$ we approximate the PPD $\pi_\Pi(y\mid\bx,\Dn)$ of \cref{eq:bnn_ppd} with Microcanonical Langevin Ensembles \citep[MILE;][]{EmanuelMILE}. As MCMC samplers come with no finite-time convergence guarantees \citep{WellingSGLD}, and in the BNN literature it is additionally well known that a complete characterization of the whole posterior distribution is computationally intractable due to a combinatorially exploding number of posterior modes \citep{wiese2023towards}, we employ a very large sampling budget and run, per task, 24 chains from a deep-ensemble warm start \citep{EmanuelConnecting}, $50{,}000$ warm-up and $10{,}000$ sampling steps, where we keep every tenth draw. This way, we make sure that we capture a major part of the functional variability that the weight space posterior expresses. We then pool the chains with equal weight in a Bayesian model average. Correspondingly, in \cref{fig:bnn_saturation} it is clearly visible that adding further chains to the Bayesian model average saturates after roughly 12 chains.
Our metric is the prior-data NLL: the mean of $-\log q_{\bth}(y\mid\bx,\Dn)$ over a task's queries with their realized labels, averaged over tasks, i.e., an estimate of the risk \cref{eq:risk} at context size $n$. We also report the oracle predictive $p(y\mid\bx)=\sigma(f_{\bm{w}}(\bx))$, which has access to the true $\bm{w}$. Its risk at context size $n$ lower-bounds that of $\pi_\Pi$ by the context error at that size.

\begin{figure}
    \centering
    \includegraphics[width=\linewidth]{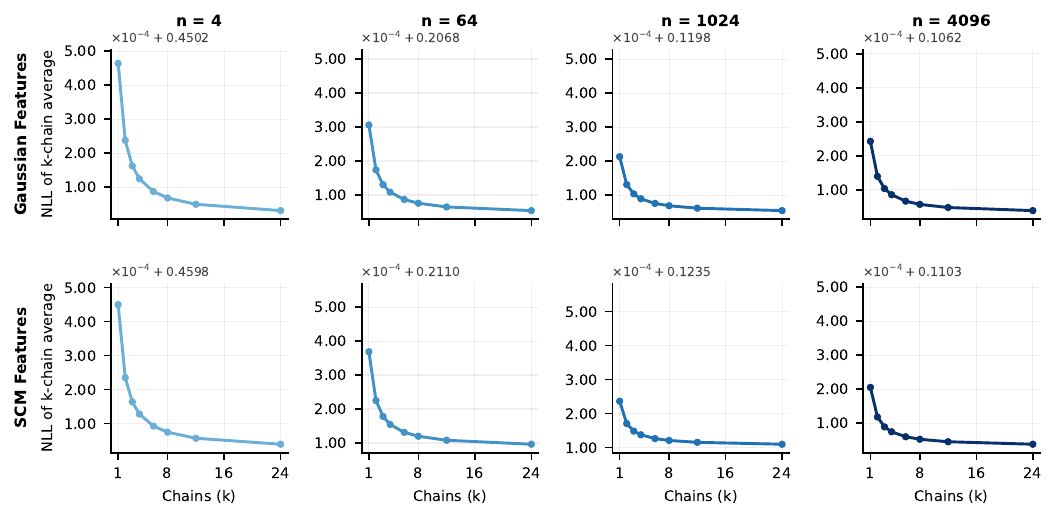}
    \caption{Holdout NLL of Bayesian model average of $k$ pooled MCMC chains for classification. The NLL saturates as more chains are added, which means a small number of chains seems sufficient to capture the functional diversity of the posterior distribution of the weights.}
    \label{fig:bnn_saturation}
\end{figure}

\paragraph{Where the amortization error concentrates}
To characterize the amortization error, we evaluate \cref{eq:amort}, $\KL(\hat\pi_\Pi\,\|\,q_{\bth})$, at each input $(\bx,\Dn)$. The inputs are 1024 additional rows per task for Gaussian features and the queries for SCM features (we cannot use additional rows for the SCM features due to the preprocessing issue explained above). We use the large PFN with hidden width 16 and uniform row counts (values for log-uniform row counts are given in parentheses).

The amortization error stems from few inputs, not from few tasks. The 10\% of inputs with the largest KL carry 60--66\% of the total at $n=64$ and over 80\% at $n\ge1024$, whereas the 10\% of tasks with the largest KL carry at most a third. These inputs lie near the decision boundary: at $n=4096$, the 20\% of inputs with the smallest predicted class-1 probability carry 80\% of the KL.

Relative to the between-chain variability of the reference, the amortization error grows with $n$. The PFN's KL to $\hat\pi_\Pi$ exceeds that of a single chain by a factor of 1.7 (3.0) at $n=64$ and 5.5 (11) at $n=4096$. At $n=4096$, the PFN predictive lies outside the range of all 24 chain predictives for over a third (two thirds) of inputs. The larger gap under log-uniform row counts is consistent with that law placing less pretraining mass on large contexts under $\Psi$. At large $n$, the PFN also shrinks confident logits, i.e., it becomes underconfident.

Aggregate miscalibration does not explain the amortization error: Against the oracle predictive $p(y\mid\bx)$, both $q_{\bth}$ and $\hat\pi_\Pi$ have calibration slopes within $0.03$ of 1 for $n\ge64$. A slope of 1 means that, averaged over inputs, the predictor is neither overconfident (slope < 1) nor underconfident (slope > 1). It is expected that the MCMC reference is well calibrated but it is interesting that the PFN is as well, as a predictor can be calibrated but still far from the PPD.

In summary, the amortization error concentrates at the decision boundary and is most severe at large context sizes that pretraining covers sparsely. The chains also disagree most at the decision boundary, so there the error of the PFN cannot be separated pointwise from the remaining error of the reference. In aggregate, however, the PFN's error exceeds the reference's approximation error due to finite-budget MCMC several-fold.

\begin{figure}
    \centering
    \includegraphics[width=\linewidth]{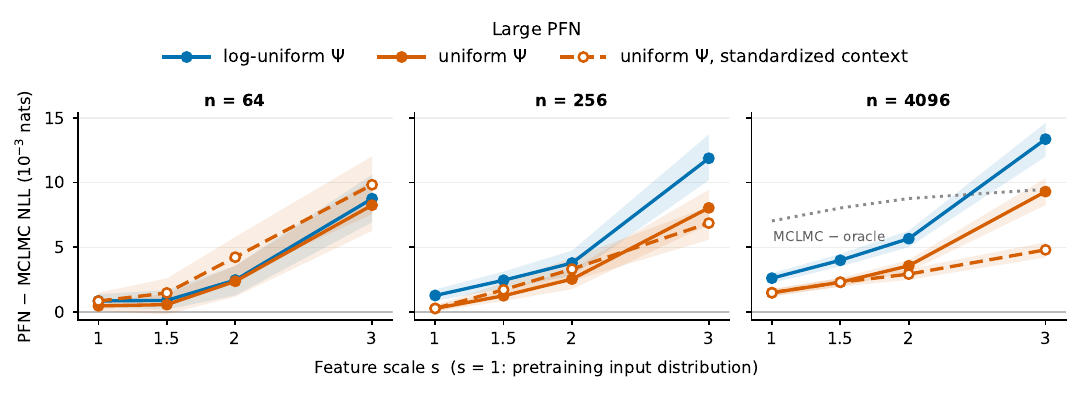}
    \caption{PFN to MCLMC gap under covariate shift. Test features are $\bx = s \bm{z}$ with $\bm{z} \sim \mathcal{N}(0, \bm{I}_4)$ and labels come from the BNN prior ($2\times8$ hidden neurons). The PFN was pretrained at $s = 1$. Each point is the mean difference over 256 test tasks. Bands are 95\% task-bootstrap intervals of each mean. Curves show the large PFN pretrained with uniform or log-uniform table size $\Psi$, and the uniform-$\Psi$ PFN with context-standardized features. Dotted grey ($n = 4096$ only): excess NLL of the MCLMC reference over the oracle.}
    \label{fig:pfn-mclmc-gap}
\end{figure}

\paragraph{Testing the PFN under deployment distribution shifts}

Pretraining minimizes the empirical risk only on average over the prior's input distribution. To test the PFN at inputs that are unlikely under the prior, we use the Gaussian feature prior and scale the features to $\bm{x} = s\bm{z}$ with $s \in \{1.5, 2, 3\}$ and draw labels from the prior's network. Because $\bm{w}$ is independent of $\bm{x}$, the posterior predictive under the training prior remains the target, and MCLMC estimates it with the same protocol. In \cref{fig:pfn-mclmc-gap}, the PFN-MCLMC gap in NLL grows with $s$ at every $n \geq 64$. At $n = 4096$, the difference is larger than the excess of MCLMC over the oracle. Under distribution shift, pretraining choices that do not matter much at $s = 1$ start to matter more, such as the more pronounced differences between the log-uniform $\Psi$ and the uniform $\Psi$. How closely the PFN matches the posterior predictive therefore depends on its pretraining input distribution.

\paragraph{Synthetic prior amortization gap for regression}

For regression, we draw tasks $y = f_{\bm{w}}(\bm{x}) + \sigma\varepsilon$ with $\log\sigma \sim \mathcal{N}(0, 1)$ and Gaussian features, train TabICL on raw labels, and compare it with MILE (MCLMC, 24 chains) on the same model for $n \leq 4096$. TabICL's regression head predicts 999 quantiles $q_1 \leq \dots \leq q_{999}$ at levels $\alpha_i = i/1000$, not a density. Its density is the derivative of the piecewise-linear quantile function: $q(y) = 10^{-3}/(q_{i+1} - q_i)$ for $y \in [q_i, q_{i+1})$, with exponential tails. The log density thus depends on the difference of two neighbouring outputs, and small errors in the predicted quantiles make it non-smooth. This inflates the NLL but barely moves the CRPS, which is linear in the quantile errors. Again, the PFN comes very close to the reference MCMC PPD across the whole range of pretraining context sizes.

The gap at intermediate context sizes comes almost entirely from tasks with low noise. In these tasks, the posterior concentrates quickly as data arrive, and the PFN lags behind: its predictive median lies further from the true function than MILE's, and it compensates with a wider predictive. Tasks with high noise show no gap at any context size, and the gap disappears once the data determine the function. The same low-noise tasks expose a weakness of the reference at the largest context sizes. There, MILE's equal-weight mixture over chains becomes over-dispersed, and the PFN predicts better than the reference. Part of the gap's decline at large $n$ therefore reflects the reference degrading, not only the PFN improving.

\begin{figure}[h]
    \centering
    \includegraphics[width=0.9\linewidth]{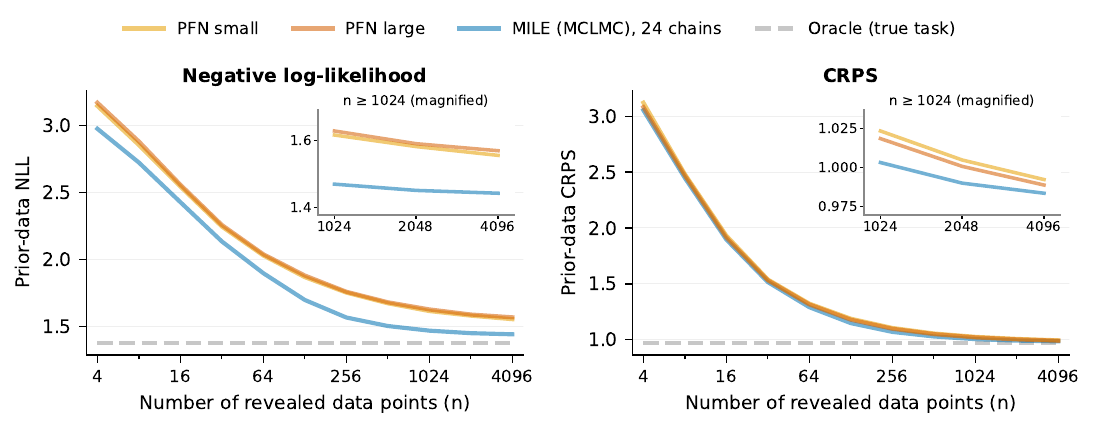}
    \caption{BNN regression with hidden width $8 \times 8$, $d = 4$, 512 test tasks: prior-data NLL (left) and Continuous Ranked Probability Score (CRPS; right) against the number of context points $n$ for the small and large PFN, the MILE (MCLMC) reference (24 chains), and the oracle. The PFN's NLL gap to MILE stems mostly from the density implied by its 999 predicted quantiles and on CRPS, which needs no density, the PFN nearly matches MILE.}
    \label{fig:pfn-mclmc-gap-reg}
\end{figure}

\section{Amortization disagreement metrics}\label{app:metrics}
We define the disagreement metrics across $K$ independently trained members $\{\bth^{(k)}\}_{k=1}^K$, evaluated on $N$ test points. All members share optimizer settings, $M$ and model size and only random seeds and pretraining data differ.

\paragraph{Jensen--Shannon divergence}
The generalized JSD \citep{JSD} splits the predictive entropy of the mixture $\bar q=\frac1K\sum_k q_{\bth^{(k)}}$,
\begin{equation}
    H(\bar{q}) = \mathrm{JSD}\big(q_{\bth^{(1)}}, \ldots, q_{\bth^{(K)}}\big) + \tfrac{1}{K}\textstyle\sum_k H\big(q_{\bth^{(k)}}\big),
    \qquad
    \mathrm{JSD}\big(q_{\bth^{(1)}}, \ldots, q_{\bth^{(K)}}\big)=\tfrac1K\textstyle\sum_k\KL\big(q_{\bth^{(k)}}\,\|\,\bar q\big).
\end{equation}
With equal weights, it equals the mutual information between the prediction and the member index. We report its average over test points,
\begin{equation}
    \widehat{AU}_{\mathrm{JSD}} = \tfrac{1}{N}\textstyle\sum_{i=1}^N \mathrm{JSD}\big(q_{\bth^{(1)}}(\cdot \mid \bx_i,\Dn), \ldots, q_{\bth^{(K)}}(\cdot \mid \bx_i,\Dn)\big).
\end{equation}
The ratio $\widehat{AU}_{\mathrm{JSD}} / H(\bar{q})$ expresses the disagreement as a fraction of the total predictive entropy.

\paragraph{Pairwise disagreement rate}
With $\hat y^{(k)}(\bx)=\argmax_y q_{\bth^{(k)}}(y\mid\bx,\Dn)$,
\begin{equation}
    \widehat{AU}_{\mathrm{dis}} = \tbinom{K}{2}^{-1}\textstyle\sum_{k<l} \tfrac{1}{N}\sum_{i=1}^N \mathbf{1}\big[\hat{y}^{(k)}(\bx_i) \neq \hat{y}^{(l)}(\bx_i)\big].
\end{equation}
Compared to the JSD, it registers disagreement only when the argmax flips, and hence underestimates functional disagreement; unlike the JSD, it measures disagreement at the level of decisions.

\paragraph{Estimation error}
$\widehat{EE}(M)$ is defined in \cref{eq:ee_metric}. It is a biased estimator of the estimation error, since the population minimizer $\bth^*$ is replaced by a reference checkpoint $\bth_{\mathrm{ref}}$ trained on $M_{\mathrm{ref}}\gg M_L$ datasets; a ground-truth minimizer of the population objective is not tractable.

\section{Experimental setup}\label{app:setup}
{\subsection{Implementation and data}}
\paragraph{Implementation}
The SG-MCMC experiments and the deep ensembles of \cref{sec:experiments} use our JAX \citep{jax} reimplementations of nanoTabPFN and TabICLv2 based on the official repository of \citet{EmanuelMILE}. IVON and the full-size TabICLv2 deep ensembles of \cref{sec:fullscale} are implemented in PyTorch \citep{paszke2019pytorch}, building on the official TabICL codebase \citep{tabicl2}. \Cref{tab:architectures} lists all architectures.

\paragraph{Evaluation data}
We hold out the 45 datasets of the TALENT-tiny core set \citep{liu2024talent} as validation data, which we use only to monitor training progress and not for checkpoint selection, and report all TALENT results on the remaining 255 datasets (170 classification, 85 regression), which we refer to simply as TALENT. {From TabArena \citep{TabArena}, the experiments of \cref{sec:experiments} use the 30 classification datasets within the scope of the small models (at most 10 classes and 100 features, a minority class of at least 1\% and at most 20\% missing values). Each dataset is evaluated with stratified 5-fold cross-validation, averaged over five random subsamples. The full-size models of \cref{sec:fullscale} are evaluated on all 38 classification and 13 regression datasets of TabArena. Finally, BeyondArena \citep{Purucker2026BeyondReally} collects datasets with grouped, clustered, and temporal structure that departs from the i.i.d.\ assumption of the synthetic prior; we use it to study member disagreement beyond the prior (\cref{app:beyond}).}

{\subsection{Bayesian model averages}\label{app:bma_setup}}
\paragraph{Deep ensembles (\cref{sec:experiments})}
All members are trained with AdamW \citep{loshchilov2019adamw} ($\beta_1{=}0.9$, $\beta_2{=}0.99$, weight decay $10^{-4}$) on batches of 64 synthetic datasets drawn from the TabICLv2 graph-SCM prior, each with 1024 rows, 1--100 features, up to 10 classes and a train fraction drawn from $[0.3, 0.9]$. The learning rate follows a cosine decay from its peak to $0.3\times$ the peak over $60\%$ of the maximum number of steps and stays constant afterwards; for nanoTabPFN, this decay is preceded by a linear warmup from $10\%$ of the peak over 600 steps. The optimizer settings were chosen via an extensive grid search on the small architecture size and an additional smaller grid search of the learning rate settings per model size. Training stops early after 10 epochs without improvement of the cross-entropy on 100 held-out synthetic validation batches, and each member keeps its weights with the lowest validation loss. Members are trained in parallel and never share a pretraining dataset; the size-dependent settings are given in \cref{tab:training_small}. The $M$-scaling runs (\cref{sec:results_scaling}) use the tiny and small recipes with $K{=}4$ members per value of $M$, where $M$ caps the number of unique datasets each member sees within the fixed step budget; the reference runs use $M{=}10^6$ (tiny) and $M{=}1.5\times10^6$ (small).

\begin{table}[h]
    \centering
    \caption{Size-dependent training settings of the deep ensembles in \cref{sec:experiments}. Maximum steps are reached only if early stopping does not trigger.}
    \label{tab:training_small}
    \begin{tabular}{lcccc}
        \toprule
        & tiny & small & medium & large \\
        \midrule
        Maximum epochs & 30 & 40 & 55 & 70 \\
        Maximum steps & 18{,}000 & 24{,}000 & 33{,}000 & 42{,}000 \\
        Peak learning rate, nanoTabPFN & $4\cdot10^{-4}$ & $4\cdot10^{-4}$ & --  & -- \\
        Peak learning rate, TabICLv2 & $4\cdot10^{-4}$ & $4\cdot10^{-4}$ & $4\cdot10^{-4}$ & $4\cdot10^{-4}$ \\
        Ensemble members $K$ & 8 & 8 & 8 & 4 \\
        \bottomrule
    \end{tabular}
\end{table}

\paragraph{{SG-MCMC}}
Both samplers target $p_\beta$ directly. Each chain is warm-started from one trained DE member \citep{EmanuelConnecting} and continues on pretraining data disjoint from that of all members. The samples of all chains are averaged with equal weight via \cref{eq:bde_pred}, forming a Bayesian deep ensemble \citep{EmanuelMILE}. The DE explores uncertainty between basins, whereas each chain gathers information within its warm-start basin. {\Cref{tab:sgmcmc_hps} lists the settings used for all reported runs. The hyperparameters for both samplers were chosen via a two-stage grid search. In a first coarser step, all listed parameters were searched. Afterwards, in a second step, the step sizes and temperatures were tuned per model size.}

\begin{table}[H]
  \centering
  \caption{{SG-MCMC settings. Sizes vary as tiny / small / medium / large. Both samplers use the same number of gradient evaluations per chain; pSMILE takes two per step.}}
  \label{tab:sgmcmc_hps}
  \small
  \begin{tabular}{lcc}
    \toprule
    & {adaSGHMC} & {pSMILE} \\
    \midrule
    {Step size} & {$2\cdot10^{-4}$} & {$10^{-4}$} \\
    {Warm-up steps} & {500} & {250} \\
    {Sampling steps} & {2500 / 3000 / 2500 / 3500} & {1250 / 1500 / 1250 / 1750} \\
    {Thinning} & \multicolumn{2}{c}{{every $10\%$ of the sampling steps (10 samples per chain)}} \\
    {Gradient evaluations per chain} & \multicolumn{2}{c}{{3000 / 3500 / 3000 / 4000}} \\
    {Chains} & \multicolumn{2}{c}{{8 / 8 / 8 / 4 }} \\
    {Temperature $T$} & \multicolumn{2}{c}{{0.01}} \\
    {Prior $p(\bth)$} & \multicolumn{2}{c}{{$\mathcal{N}(0,I)$}} \\
    {Batch size} & \multicolumn{2}{c}{{64 datasets}} \\
    \bottomrule
  \end{tabular}
\end{table}

\paragraph{{IVON}}
{IVON minimizes the Gibbs objective $\E_Q[\losse(\bth)]+\beta^{-1}\KL(Q\,\|\,p)$ over diagonal Gaussians $Q=\mathcal{N}(\bm{m},\mathrm{diag}(\bm{s}^2))$, and its effective-sample-size hyperparameter is exactly $\beta$ in \cref{eq:tempered}. Predictions average over $S$ draws from $Q$ and are compared with the posterior mean $\bm{m}$ ($S{=}0$ in \cref{tab:ivon_scaling}).}
{\Cref{tab:ivon_hps} lists the settings used for all reported runs. Similar to the hyperparameter search for the SG-MCMC samplers, the parameters for IVON were also selected in a two-step procedure: First, all listed parameters were coarsely searched, and in a second step the effective sample size and learning rate were jointly optimized per model size.}

\begin{table}[H]
  \centering
  \caption{{IVON settings. All sizes additionally share gradient clipping at $10$, a cosine schedule with $1\%$ linear warm-up decaying to $10^{-6}$, batches of 64 datasets and bfloat16 precision.}}
  \label{tab:ivon_hps}
  \small
  \begin{tabular}{lcccc}
    \toprule
    & {tiny} & {small} & {medium} & {large} \\
    \midrule
    {Effective sample size $\beta$} & {$1.92\cdot10^{6}$} & {$3.84\cdot10^{6}$} & {$6.4\cdot10^{6}$} & {$1.92\cdot10^{7}$} \\
    {Learning rate} & {1.0} & {1.0} & {0.3} & {0.3} \\
    {Training steps} & {300{,}000} & {200{,}000} & {200{,}000} & {200{,}000} \\
    {$\beta_1$} & \multicolumn{4}{c}{{0.9}} \\
    {$\beta_2$} & \multicolumn{4}{c}{{0.99999}} \\
    {Weight decay} & \multicolumn{4}{c}{{$10^{-4}$}} \\
    {Hessian initialization} & \multicolumn{4}{c}{{0.5}} \\
    {Clip radius} & \multicolumn{4}{c}{{$10^{-3}$}} \\
    \bottomrule
  \end{tabular}
\end{table}

\subsection{Full-size TabICLv2}
\paragraph{Full-size TabICLv2 deep ensembles (\cref{app:fullDE})}
We train $K{=}8$ members each for classification and regression, differing in the seeds of the weight initialization and of the synthetic data stream. Regression members use bias-free layer normalization and a quantile head with 999 quantiles. All members are trained with Muon \citep{jordan2024muon} applied to all parameters, weight decay $0.01$, batches of 64 datasets, bfloat16 precision and gradient clipping at $10$, on four NVIDIA H200 GPUs per member. Each stage starts from the previous stage's weights. Within a stage, the learning rate warms up linearly over the first $1\%$ of steps and then follows a cosine decay to $10^{-7}$. Following the curriculum of \citet{tabicl2}, training proceeds in three stages: (1) $130{,}000$ steps at learning rate $8\cdot10^{-4}$ on datasets with up to 1024 rows and a train fraction drawn from $[0.3, 0.9]$; (2) $10{,}400$ steps at $10^{-4}$ on 400--10{,}240 rows (log-uniform); and (3) $11{,}000$ steps at $2\cdot10^{-5}$ with gradient clipping at $1$ on 400--60{,}000 rows, both with a train fraction of about $0.8$. Datasets have 1--100 features and up to 10 classes throughout. Compared with the official recipe, stages 1 and 2 are shorter ($500{,}000$ and $40{,}000$ steps in \citet{tabicl2}), stage 3 is slightly longer ($10{,}000$ steps), and training uses bfloat16 instead of float32. The total number of training steps was chosen such that the total training over all three stages per ensemble member took about 48 hours across a node with NVIDIA H200 GPUs.

\paragraph{Attention modifications for bfloat16 training}
To train stably in bfloat16, we make two changes to the attention of TabICLv2. First, queries and keys are RMS-normalized per head before the dot product in the column, row and ICL transformers (QK-normalization; \citealp{henry2020qknorm}). Second, following \citet{pfeiffer2026tabh2o}, we bound the length-dependent base scale of QASSMax as $c_{\max}\tanh\big(g(\log n)/c_{\max}\big)$ with $c_{\max}{=}10$, where $g$ is the original base MLP and $n$ the sequence length, and compute it in float32. The unbounded MLP extrapolates linearly in $\log n$, so at sequence lengths beyond those seen in training it can saturate the attention and overflow; with the query modulation $1+\tanh(\cdot)\in[0,2]$, the total scale now lies in $[-2c_{\max}, 2c_{\max}]$. At initialization, $c_{\max}\tanh(x/c_{\max})\approx x$, so the bounded variant starts out equal to the original. Both changes add fewer than $2{,}000$ parameters.

\begin{table}[h]
    \centering
    \caption{Architecture hyperparameters. NanoTabPFN large is omitted due to its training cost; TabICLv2 full, used in \cref{app:fullDE}, follows the architecture of \citet{tabicl2} with added QK-normalization and bounded QASSMax scaling. In TabICLv2, the ICL stage operates on the concatenated CLS column tokens (ICL width $=$ embedding dimension $\times$ CLS column tokens), and every block's MLP hidden dimension is twice its width. All models predict up to 10 classes.}
    \label{tab:architectures}
    \begin{tabular}{lccccc}
        \toprule
        & tiny & small & medium & large & full \\
        \midrule
        \multicolumn{6}{l}{\textit{NanoTabPFN}} \\
        Parameters (rounded) & 1M & 2M & 4M & -- & -- \\
        Transformer layers & 4 & 4 & 5 & -- & -- \\
        Embedding dimension & 128 & 192 & 256 & -- & -- \\
        Attention heads & 4 & 6 & 8 & -- & -- \\
        MLP hidden dimension & 256 & 384 & 512 & -- & -- \\
        \midrule
        \multicolumn{6}{l}{\textit{TabICLv2}} \\
        Parameters (rounded) & 1.5M & 2M & 7.5M & 9M & 27.5M \\
        Blocks (column / row / ICL) & 2 / 2 / 4 & 3 / 3 / 5 & 3 / 3 / 5 & 3 / 3 / 6 & 3 / 3 / 12 \\
        Heads (column / row / ICL) & 4 / 4 / 6 & 4 / 4 / 6 & 8 / 8 / 8 & 8 / 8 / 8 & 8 / 8 / 8 \\
        Embedding dimension & 64 & 64 & 128 & 128 & 128 \\
        ICL width & 192 & 192 & 384 & 384 & 512 \\
        CLS column tokens & 3 & 3 & 3 & 3 & 4 \\
        Column inducing vectors & 64 & 64 & 128 & 128 & 128 \\
        Feature group size & 3 & 3 & 3 & 3 & 3 \\
        \bottomrule
    \end{tabular}
\end{table}

\subsection{Code}\label{sec:code}
We will make our code available in the near future.

\section{Further results}\label{app:furtherResults}

\subsection{Pretraining data scaling}\label{app:M-scaling}
We present the extended data-scaling analysis of \cref{sec:results_scaling} for the tiny models and the synthetic test set, showing how estimation error and member disagreement change with the number of pretraining datasets $M$.

\begin{figure}[H]
    \centering
    \includegraphics[width=0.95\linewidth]{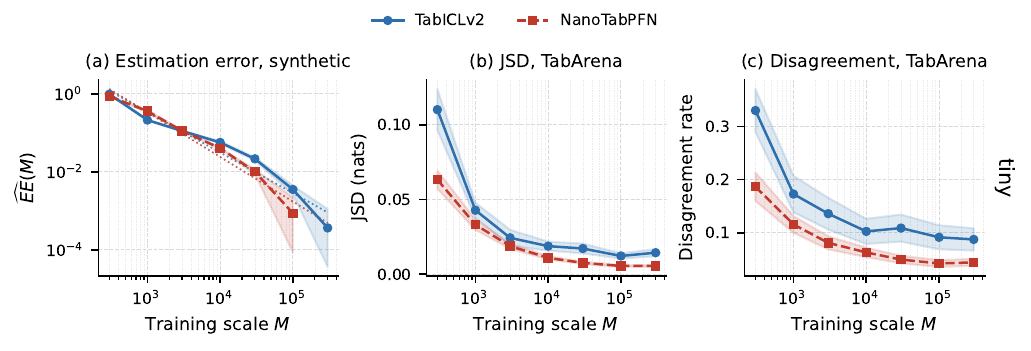}
    \caption{As \cref{fig:scaling_PFN_joint}, for the tiny models: estimation error $\widehat{EE}(M)$ (log-log) on synthetic data and member disagreement on TabArena versus the number of pretraining datasets $M$. Bands show $\pm1$ standard error over the $K{=}4$ members ($\widehat{EE}$) or over datasets (disagreement).}
    \label{fig:scaling_PFN_tiny}
\end{figure}

\begin{figure}[H]
    \centering
    \includegraphics[width=\linewidth]{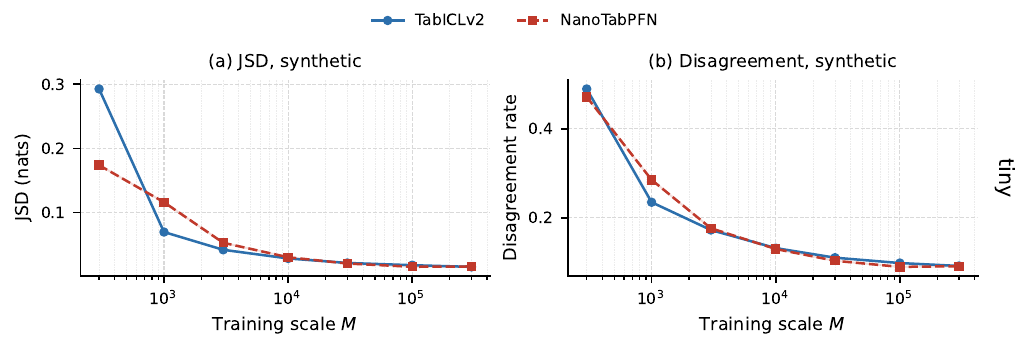}\\[2pt]
    \includegraphics[width=\linewidth]{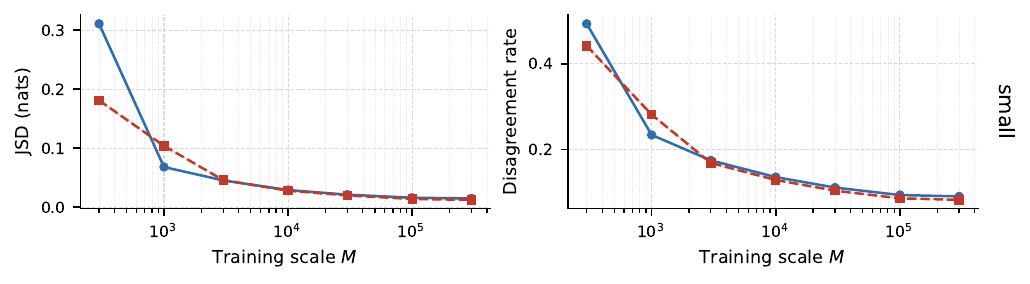}
    \caption{Member disagreement ($\widehat{AU}_{\mathrm{JSD}}$, $\widehat{AU}_{\mathrm{dis}}$) on the synthetic test set versus the number of pretraining datasets $M$, complementing the TabArena results of \cref{fig:scaling_PFN_joint}. Top: tiny, bottom: small models.}
    \label{fig:scaling_PFN_synthetic}
\end{figure}

\subsection{Model capacity}\label{app:capacity}
Experimental details on the capacity axis scaling testing whether larger models disagree less by comparing member disagreement across four TabICLv2 sizes on the synthetic test set drawn from the prior and on the TabArena classification datasets at two context sizes.

\begin{table}[H]
  \centering
  \caption{{Member disagreement ($K{=}4$) across TabICLv2 sizes on the synthetic test set and on the 30 TabArena classification datasets (mean $\pm$ standard error over datasets). $\widehat{AU}_{\mathrm{JSD}}$ in $10^{-3}$ nats, $\widehat{AU}_{\mathrm{dis}}$ in \%.}}
  \label{tab:capacity}
  \small
  \setlength{\tabcolsep}{4pt}
  \begin{tabular}{lcccccc}
    \toprule
    & \multicolumn{2}{c}{{synthetic}} & \multicolumn{2}{c}{TabArena, context 1000} & \multicolumn{2}{c}{TabArena, context 10000} \\
    \cmidrule(lr){2-3}\cmidrule(lr){4-5}\cmidrule(lr){6-7}
    Model & {$\widehat{AU}_{\mathrm{JSD}}$} & {$\widehat{AU}_{\mathrm{dis}}$} & $\widehat{AU}_{\mathrm{JSD}}$ & $\widehat{AU}_{\mathrm{dis}}$ & $\widehat{AU}_{\mathrm{JSD}}$ & $\widehat{AU}_{\mathrm{dis}}$ \\
    \midrule
    {tiny (1.5M)}   & {16.1} & {9.4} & $12.5 \pm 2.0$ & $7.5 \pm 1.4$ & $11.8 \pm 1.6$ & $8.0 \pm 1.5$ \\
    {small (2M)}    & {15.5} & {9.2} & $11.4 \pm 1.6$ & $7.5 \pm 1.5$ & $11.0 \pm 1.4$ & $7.8 \pm 1.5$ \\
    {medium (7.5M)} & -- & {--} & $9.7 \pm 1.5$ & $6.7 \pm 1.4$ & $10.5 \pm 1.3$ & $7.0 \pm 1.4$ \\
    {large (9M)}    & {13.6} & {8.2} & $7.4 \pm 1.4$ & $5.9 \pm 1.2$ & $8.5 \pm 1.4$ & $6.5 \pm 1.2$ \\
    \bottomrule
  \end{tabular}
\end{table}

\subsection{Disagreement concentration}\label{app:concentration}
To see where the averaging gain between independently pretrained TabICLv2 members comes from, we evaluate it per test point. For a test point $(\bx_i,y_i)$ with context $\Dn$, the pointwise gain
$$g_i=\log\bar q(y_i\mid\bx_i,\Dn)-\tfrac1K\textstyle\sum_{k=1}^K\log q_k(y_i\mid\bx_i,\Dn)\ \ge0 $$
is the log-probability that the mixture $\bar q$ assigns to the observed label minus the members' average log-probability of it. It is zero exactly when all members assign the same probability to $y_i$, and its mean over the test points is the log-loss improvement of $\bar q$ over its average member, i.e., the empirical counterpart of $\Delta_p$ in Lemma 1. We sort the test points by $g_i$ and plot the share of $\sum_i g_i$ carried by the fraction of points with the largest gain on the synthetic test set and on TabArena.

\begin{figure}[H]
    \centering
    \includegraphics[width=\linewidth]{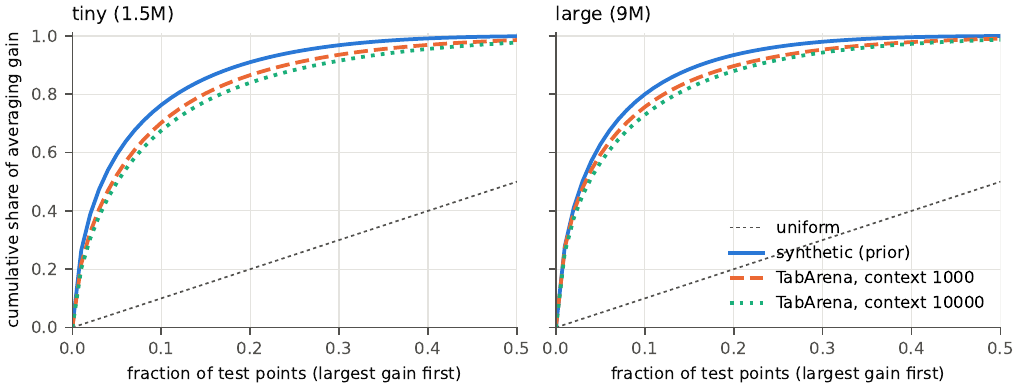}
    \caption{{Share of the total pointwise gain carried by the test points with the largest gain, for tiny (left) and large (right) TabICLv2 with K = 4 members. Dashed: gain spread evenly over all points.}}
    \label{fig:disagreement_concentration}
\end{figure}

\subsection{Deep ensembles}\label{app:DE}
We report the deep-ensemble scaling results for nanoTabPFN, complementing the TabICLv2 results of the main text.

\begin{figure}[H]
    \centering
    \includegraphics[width=\linewidth]{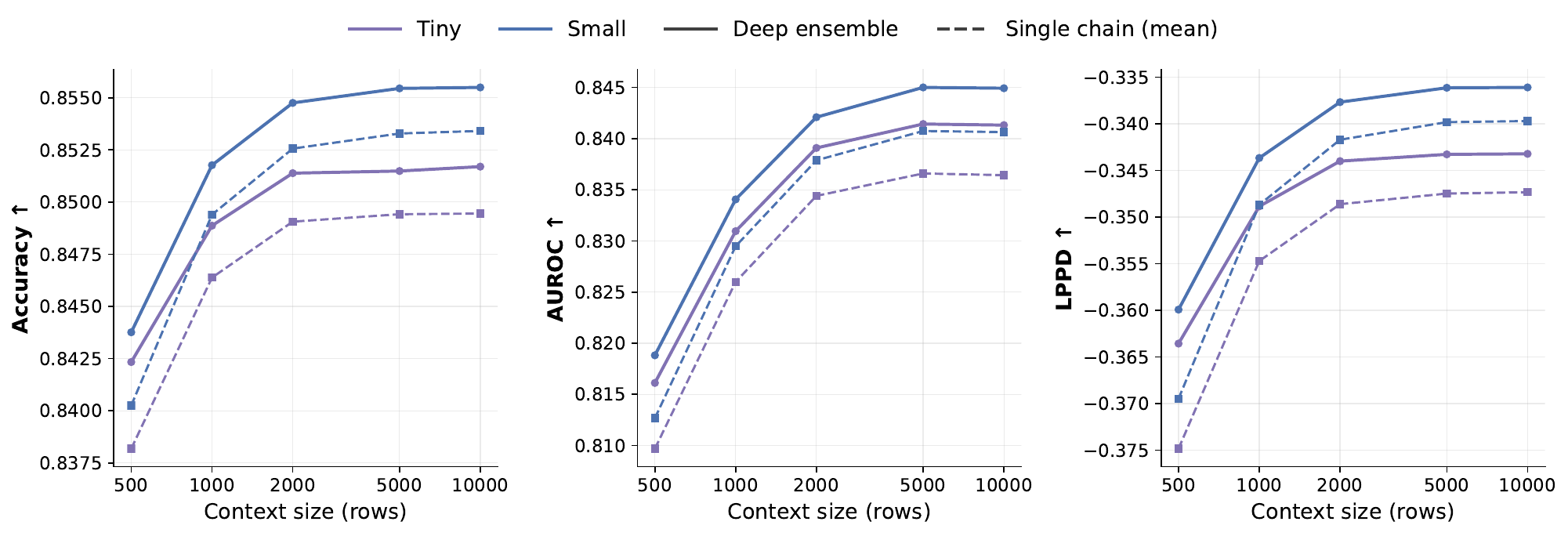}
    \caption{{As \cref{fig:DE-BDE-scaling}, for nanoTabPFN: accuracy, ROC AUC and LPPD on the 30 TabArena classification datasets across the maximal number of context rows, for deep ensembles and their average single member at the tiny and small model sizes.}}
    \label{fig:DE-scaling-nanoTabPFN}
\end{figure}

\subsection{Variational sampling via IVON}\label{app:IVON}
Additional evidence on TALENT classification to test whether sampling from IVON's variational posterior improves on its posterior mean, averaging $S$ posterior samples for three TabICLv2 sizes.

\begin{table}[H]
  \centering
  \caption{IVON variational sampling on the 170 TALENT classification datasets. $S=0$ is the posterior mean; $S\ge1$
  averages $S$ posterior samples without the mean. LPPD is the mean log predictive density per
  test point. $^{+}$/$^{-}$: significantly better/worse than the same model at $S=0$ (paired
  Wilcoxon over datasets, $p<0.05$).}
  \label{tab:ivon_scaling}
  \small
  \begin{tabular}{llccc}
    \toprule
    Model & $S$ & Accuracy $\uparrow$ & ROC AUC $\uparrow$ & LPPD $\uparrow$ \\
    \midrule
    tiny  & 0  & 0.8073 & 0.8737 & $-0.4464$ \\
          & 1  & 0.8035$^{-}$ & 0.8714$^{-}$ & $-0.4558^{-}$ \\
          & 8  & 0.8068 & 0.8738 & $-0.4493^{-}$ \\
          & 16 & 0.8063 & 0.8742 & $-0.4488^{-}$ \\
          & 32 & 0.8077 & 0.8752 & $-0.4484$ \\
          & 64 & 0.8072 & 0.8749 & $-0.4489$ \\
    \midrule
    small & 0  & 0.8129 & 0.8812 & $-0.4463$ \\
          & 1  & 0.8125 & 0.8811$^{-}$ & $-0.4488^{-}$ \\
          & 8  & 0.8122$^{-}$ & 0.8814 & $-0.4470^{-}$ \\
          & 16 & 0.8130 & 0.8816 & $-0.4465^{-}$ \\
          & 32 & 0.8134 & 0.8816 & $-0.4460^{+}$ \\
          & 64 & 0.8134 & 0.8817 & $-0.4459$ \\
    \midrule
    large & 0  & 0.8169 & 0.8824 & $-0.4338$ \\
          & 1  & 0.8153$^{-}$ & 0.8812$^{-}$ & $-0.4360^{-}$ \\
          & 8  & 0.8173 & 0.8824 & $-0.4332$ \\
          & 16 & 0.8171 & 0.8828$^{+}$ & $-0.4330$ \\
          & 32 & 0.8171 & 0.8827 & $-0.4330$ \\
          & 64 & 0.8173 & 0.8827 & $-0.4329$ \\
    \bottomrule
  \end{tabular}
\end{table}

\subsection{SG-MCMC sampling}\label{app:SGMCMC}
Extended results comparing the two SG-MCMC samplers, adaSGHMC and pSMILE, each warm-started from the deep-ensemble members, to the deep ensemble itself on the 30 TabArena classification datasets. Despite extensive hyperparameter searches, no parameters for adaSGHMC could be found that result in the sampler moving without at the same time decreasing performance compared to the DE.

\begin{table}[H]
  \centering
  \caption{{Relative LPPD gain (\%) of the Bayesian deep ensemble over the deep ensemble it is warm-started from, for both samplers, on the 30 TabArena classification datasets at context sizes 1000 and 10000. In parentheses: number of datasets on which the Bayesian deep ensemble improves on the deep ensemble. $^{\ast}$: significant change, paired Wilcoxon over datasets, $p<0.05$.}}
  \label{tab:sgmcmc_samplers}
  \small
  \begin{tabular}{lcccc}
    \toprule
    & \multicolumn{2}{c}{context size 1000} & \multicolumn{2}{c}{context size 10000} \\
    \cmidrule(lr){2-3}\cmidrule(lr){4-5}
    Model & adaSGHMC & pSMILE & adaSGHMC & pSMILE \\
    \midrule
    tiny   & $-0.05$ (11/30) & $+0.27^{\ast}$ (26/30) & $-0.05$ (11/30) & $+0.35^{\ast}$ (26/30) \\
    small  & $-0.64^{\ast}$ (4/30) & $+0.37^{\ast}$ (22/30) & $-1.12^{\ast}$ (5/30) & $+0.42^{\ast}$ (25/30) \\
    medium & -- & $+0.07$ (12/30) & -- & $+0.08$ (13/30) \\
    large  & $+0.12$ (18/30) & $+0.27^{\ast}$ (26/30) & $-0.52^{\ast}$ (9/30) & $+0.32^{\ast}$ (26/30) \\
    \bottomrule
  \end{tabular}
\end{table}

\subsection{Full-size TabICLv2 deep ensembles}\label{app:fullDE}
This section details the results summarized in \cref{sec:fullscale} where we ask whether the removable amortization error persists at the scale of current state-of-the-art PFNs, as well as with the full preprocessing pipeline. Since within-basin methods added little in \cref{sec:results_bde} and to reduce the computational costs, we restrict this section to deep ensembles. We also extend the analysis to regression, for which \Cref{prop:decomp,lem:ambiguity} hold unchanged with densities and differential entropies.

\textbf{Setup} We pretrain $K{=}8$ models that largely follow the architecture and training setup of TabICLv2 \citep{tabicl2} including the three-stage training curriculum, with details in \cref{app:setup}. Each member is run with the same inference-time ensemble of $8$ preprocessing and permutation configurations.

\textbf{Results} \textit{The gain persists at full scale.} After each of the three training stages, the ensemble improves on its average member in LPPD, Brier score and ROC AUC (\cref{fig:FullTabICL}a and \cref{tab:full_de_stages}). On the 170 TALENT classification datasets, the LPPD gain is $0.8\%$ at every stage. The ensemble is also significantly better than the best single member, selected after the fact, in all three metrics ($p\le2\cdot10^{-4}$ at stage~3). On TabArena, the ensemble improves on its average member on all 38 classification datasets and on the best member on 27 ($p=0.01$).

\textit{More training does not remove it.} All three stages improve every member, and most strongly on the large contexts that stages~2 and~3 train on: on contexts beyond 10{,}240 rows, the LPPD of the average member improves by $6.0\%$ from stage~1 to stage~3. Yet the ensemble gain stays at $0.7$--$0.9\%$ in every context-size bin and at every stage (\cref{tab:full_de_bins}). This is the full-scale counterpart of the data axis in \cref{sec:results_scaling}: further pretraining improves the members without reconciling them. The gain also does not come from the inference-time preprocessing. Giving each member 8 of 64 distinct preprocessing configurations, instead of sharing the same eight, leaves the ensemble unchanged (TabArena Elo $1555$ vs.\ $1552$). The gain thus stems from the independently pretrained weights, i.e., from amortization uncertainty, rather than from the context-level randomness that the preprocessing ensemble already averages over.

\textit{Ensembling as parallel pretraining.} A single member reaches a TabArena classification Elo of $1460$--$1514$. The eight-member ensemble reaches $1552$, within the confidence interval of the official TabICLv2 checkpoint ($1575$; \cref{fig:FullTabICL}c). Since the members never communicate, they can be trained concurrently on separate nodes. The ensemble then requires about $3\times$ the GPU hours of the official recipe at our throughput, but only about $0.37\times$ its wall-clock time, at $8\times$ the inference cost (\cref{tab:full_de_compute}). Ensembling (potentially shorter) independent pretraining runs is thus a simple way to parallelize PFN pretraining beyond a single node.

\begin{table}[H]
  \centering
  \caption{{Training budget of one full-size member against the official TabICLv2 recipe \citep{tabicl2}. Throughput was measured for our setup (bfloat16, four H200 GPUs, batch size 64); the official durations apply our throughput to its step counts and are therefore an estimate. GPU hours count four GPUs.}}
  \label{tab:full_de_compute}
  \small
  \begin{tabular}{lrrrrr}
    \toprule
    & \multicolumn{2}{c}{steps} & & \multicolumn{2}{c}{wall-clock (h)} \\
    \cmidrule(lr){2-3}\cmidrule(lr){5-6}
    Stage (max.\ rows) & official & ours & steps/h & official & ours \\
    \midrule
    1 (1024) & $500{,}000$ & $130{,}000$ & $6{,}033$ & $82.9$ & $21.5$ \\
    2 (10{,}240) & $40{,}000$ & $10{,}400$ & $2{,}150$ & $18.6$ & $4.8$ \\
    3 (60{,}000) & $10{,}000$ & $11{,}000$ & $650$ & $15.4$ & $16.9$ \\
    \midrule
    Total, one model & $550{,}000$ & $151{,}400$ & & $116.9$ & $43.3$ \\
    GPU hours, one model & & & & $467$ & $173$ \\
    GPU hours, 8-member ensemble & & & & & $1{,}386$ \\
    \bottomrule
  \end{tabular}
\end{table}

\begin{table}[H]
  \centering
  \caption{{Full-size TabICLv2 on TALENT after each training stage. \emph{Average member} averages each dataset's metric over the 8 members; \emph{best member} is the member with the best mean at that stage, selected after the fact. $^{\ast}$: deep ensemble significantly better than the best member (paired Wilcoxon over datasets, $p<0.05$).}}
  \label{tab:full_de_stages}
  \small
  \begin{tabular}{llccc}
    \toprule
    \multicolumn{5}{l}{\textit{Classification (170 datasets)}} \\
    Stage & & LPPD $\uparrow$ & Brier $\downarrow$ & ROC AUC $\uparrow$ \\
    \midrule
    1 & average member & $-0.3713$ & $0.2132$ & $0.9021$ \\
     & best member & $-0.3693$ & $0.2122$ & $0.9024$ \\
     & deep ensemble & $-0.3684^{\ast}$ & $0.2116^{\ast}$ & $0.9032^{\ast}$ \\
    \addlinespace
    2 & average member & $-0.3634$ & $0.2089$ & $0.9039$ \\
     & best member & $-0.3619$ & $0.2082$ & $0.9045$ \\
     & deep ensemble & $-0.3606^{\ast}$ & $0.2073^{\ast}$ & $0.9049^{\ast}$ \\
    \addlinespace
    3 & average member & $-0.3621$ & $0.2081$ & $0.9042$ \\
     & best member & $-0.3605$ & $0.2073$ & $0.9048$ \\
     & deep ensemble & $-0.3592^{\ast}$ & $0.2066^{\ast}$ & $0.9051^{\ast}$ \\
    \midrule
    \multicolumn{5}{l}{\textit{Regression (85 datasets)}} \\
    Stage & & nRMSE $\downarrow$ & RMSE $\downarrow$ & $R^2$ $\uparrow$ \\
    \midrule
    1 & average member & $0.4874$ & $18{,}394$ & $0.6728$ \\
     & best member & $0.4748$ & $17{,}495$ & $0.6887$ \\
     & deep ensemble & $0.4774$ & $18{,}211$ & $0.6846$ \\
    \addlinespace
    2 & average member & $0.4605$ & $17{,}202$ & $0.7030$ \\
     & best member & $0.4532$ & $16{,}925$ & $0.7085$ \\
     & deep ensemble & $0.4561$ & $17{,}073$ & $0.7070$ \\
    \addlinespace
    3 & average member & $0.4589$ & $17{,}218$ & $0.7043$ \\
     & best member & $0.4536$ & $17{,}061$ & $0.7087$ \\
     & deep ensemble & $0.4548$ & $17{,}106$ & $0.7081$ \\
    \bottomrule
  \end{tabular}
\end{table}

\begin{table}[H]
  \centering
  \caption{{Relative improvement (\%) of the deep ensemble over its average member per context-size bin (training rows available as context) on TALENT. Classification uses LPPD, regression nRMSE; positive values are improvements.}}
  \label{tab:full_de_bins}
  \small
  \begin{tabular}{lccc}
    \toprule
    & $\leq$1024 & 1024--10240 & $>$10240 \\
    \midrule
    \textit{Classification} & $n=33$ & $n=100$ & $n=37$ \\
    Stage 1 & $+0.85$ & $+0.70$ & $+0.86$ \\
    Stage 2 & $+0.85$ & $+0.77$ & $+0.72$ \\
    Stage 3 & $+0.85$ & $+0.78$ & $+0.75$ \\
    \midrule
    \textit{Regression} & $n=21$ & $n=43$ & $n=21$ \\
    Stage 1 & $+0.69$ & $+2.16$ & $+3.24$ \\
    Stage 2 & $+0.73$ & $+1.09$ & $+0.94$ \\
    Stage 3 & $+0.68$ & $+1.04$ & $+0.90$ \\
    \bottomrule
  \end{tabular}
\end{table}

\subsection{Ensemble size analysis}\label{app:ensembleSize}
To test how many members a full-size TabICLv2 deep ensemble needs, we evaluated every possible sub-ensemble of the eight members, from $K{=}1$ to $8$, and recorded the mean and standard deviation of LPPD (classification) and nRMSE (regression) across all sub-ensembles of each size on TALENT. Both metrics improve with every added member but quickly saturate (\cref{fig:abl-ensemble-size}): four members already achieve about 85\% of the full 8-member gain. Additional members mainly reduce the variance across sub-ensembles, so the result depends less on which members are included.
\begin{figure}[H]
    \centering
    \includegraphics[width=1\linewidth]{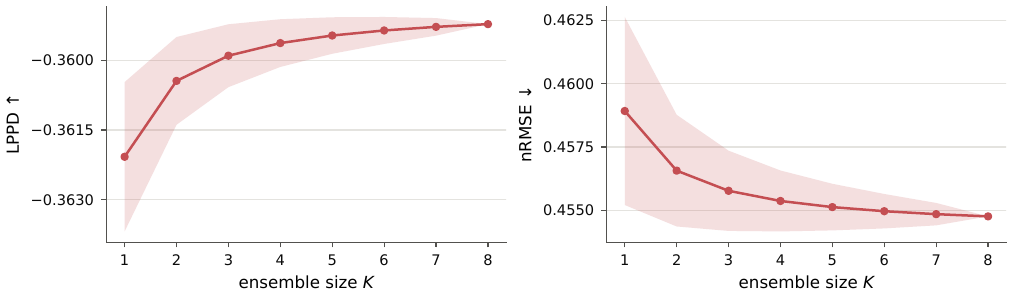}
    \caption{Full-size TabICLv2 deep ensembles on TALENT: LPPD on the 170 classification datasets (left) and nRMSE on the 85 regression datasets (right) versus ensemble size $K$. Lines show the mean, bands $\pm1$ standard deviation over all $\binom{8}{K}$ sub-ensembles of the eight stage-3 members.}
    \label{fig:abl-ensemble-size}
\end{figure}

\subsection{Disagreement beyond the prior}\label{app:beyond}
Results for testing the hypothesis that full-size TabICLv2 members disagree more outside the prior's support, comparing TabArena datasets within the support to BeyondArena datasets with shifted splits or beyond it.

\begin{figure}[H]
    \centering
    \includegraphics[width=0.75\linewidth]{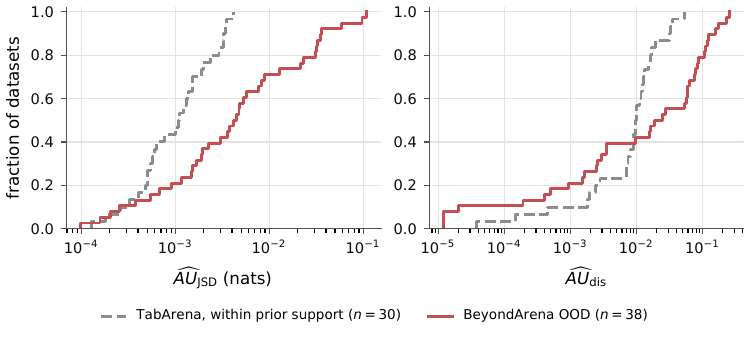}
    \caption{{Empirical CDFs over datasets of the disagreement between full-size TabICLv2 classification members, $\widehat{AU}_{\mathrm{JSD}}$ (left) and the pairwise disagreement rate $\widehat{AU}_{\mathrm{dis}}$ (right, log scale), on TabArena datasets within the prior's support (iid splits, at most 100 features, 10 classes and 48{,}000 training rows) and on BeyondArena datasets with a temporal or grouped split or outside that task space. Group summaries in \cref{tab:ood_groups}.}}
    \label{fig:ood_disagreement}
\end{figure}

\begin{table}[H]
  \centering
  \caption{{Member disagreement of full-size TabICLv2 classification members on TabArena datasets within the prior's support and on BeyondArena datasets beyond it. Prior support: at most 100 features, 10 classes and 48{,}000 training rows; BeyondArena OOD: a temporal or grouped split, or outside that task space. Text datasets are excluded. Each dataset's value is the mean over its splits; $p$: one-sided Mann--Whitney $U$ test, datasets are listed in \cref{tab:ood_tabarena_datasets,tab:ood_beyond_datasets}.}}
  \label{tab:ood_groups}
  \small
  \begin{tabular}{lll}
    \toprule
    & TabArena, within prior support & BeyondArena OOD \\
    \midrule
    Datasets & 30  & 38 \\
    Split & iid & 20 iid, 7 temporal, 11 grouped \\
    Features, median (range) & 20 (4--94) & 152 (3--22{,}215) \\
    Training rows, median (range) & 3{,}637 (499--47{,}679) & 30{,}625 (114--1{,}000{,}350) \\
    Classes, max & 5 & 20 \\
    Beyond prior support & 0 & 33 (23 $>$100 features, 17 $>$48{,}000 rows, 3 $>$10 classes) \\
    Median $\widehat{AU}_{\mathrm{JSD}}$ (nats) & $0.0011$ & $0.0043$ ($p=1.4\cdot10^{-4}$) \\
    Median $\widehat{AU}_{\mathrm{dis}}$ & $0.0099$ & $0.0218$ ($p=0.06$) \\
    \bottomrule
  \end{tabular}
\end{table}

\begin{table}[H]
  \centering
  \caption{{TabArena classification datasets within the prior's support (all iid splits): number of features, mean number of training rows per split and number of classes. The eight remaining TabArena classification datasets lie beyond the support (more than 100 features: APSFailure, Bioresponse, hiva\_agnostic, kddcup09\_appetency, MIC; more than 48{,}000 training rows: customer\_satisfaction\_in\_airline, GiveMeSomeCredit, SDSS17) and are not used for the experiments resulting in \cref{tab:ood_groups}.}}
  \label{tab:ood_tabarena_datasets}
  \small
  \begin{tabular}{lrrr}
    \toprule
    Dataset & Features & Train rows & Classes \\
    \midrule
    Amazon\_employee\_access & 9 & 21{,}846 & 2 \\
    anneal & 38 & 599 & 5 \\
    bank-marketing & 13 & 30{,}141 & 2 \\
    Bank\_Customer\_Churn & 10 & 6{,}667 & 2 \\
    blood-transfusion-service-center & 4 & 499 & 2 \\
    churn & 19 & 3{,}333 & 2 \\
    coil2000\_insurance\_policies & 85 & 6{,}548 & 2 \\
    credit-g & 20 & 667 & 2 \\
    credit\_card\_clients\_default & 23 & 20{,}000 & 2 \\
    diabetes & 8 & 512 & 2 \\
    Diabetes130US & 47 & 47{,}679 & 2 \\
    E-CommereShippingData & 10 & 7{,}333 & 2 \\
    Fitness\_Club & 6 & 1{,}000 & 2 \\
    hazelnut-spread-contaminant-detection & 30 & 1{,}600 & 2 \\
    heloc & 23 & 6{,}973 & 2 \\
    HR\_Analytics\_Job\_Change\_of\_Data\_Scientists & 12 & 12{,}772 & 2 \\
    in\_vehicle\_coupon\_recommendation & 24 & 8{,}456 & 2 \\
    Is-this-a-good-customer & 13 & 1{,}149 & 2 \\
    jm1 & 21 & 7{,}257 & 2 \\
    Marketing\_Campaign & 25 & 1{,}493 & 2 \\
    maternal\_health\_risk & 6 & 676 & 3 \\
    NATICUSdroid & 86 & 4{,}994 & 2 \\
    online\_shoppers\_intention & 17 & 8{,}220 & 2 \\
    polish\_companies\_bankruptcy & 64 & 3{,}940 & 2 \\
    qsar-biodeg & 41 & 703 & 2 \\
    seismic-bumps & 15 & 1{,}723 & 2 \\
    splice & 60 & 2{,}127 & 3 \\
    students\_dropout\_and\_academic\_success & 36 & 2{,}949 & 3 \\
    taiwanese\_bankruptcy\_prediction & 94 & 4{,}546 & 2 \\
    website\_phishing & 9 & 902 & 3 \\
    \bottomrule
  \end{tabular}
\end{table}

\begin{table}[H]
  \centering
  \caption{{BeyondArena classification datasets outside the prior's support or with a shifted (temporal or grouped) split, with the reason for inclusion. Training rows: mean per split. Datasets with more than 700{,}000 training rows are evaluated with a single preprocessing configuration instead of eight.}}
  \label{tab:ood_beyond_datasets}
  \scriptsize
  \begin{tabular}{llrrrl}
    \toprule
    Dataset & Split & Feat. & Train rows & Cls & Outside the prior \\
    \midrule
    amex\_non\_iid\_1m & grouped & 190 & 1{,}000{,}350 & 2 & shift, $>$100 feat., $>$48k rows \\
    asp\_potassco\_classification & grouped & 137 & 808 & 11 & shift, $>$100 feat., $>$10 cls \\
    cardiotocography & grouped & 23 & 1{,}417 & 3 & shift \\
    covertype & grouped & 14 & 341{,}750 & 3 & shift, $>$48k rows \\
    dementia\_prediction & grouped & 9 & 247 & 3 & shift \\
    mice\_protein\_trisomy\_discriminant & grouped & 77 & 720 & 8 & shift \\
    micro\_mass & grouped & 1{,}083 & 381 & 20 & shift, $>$100 feat., $>$10 cls \\
    musk & grouped & 167 & 4{,}399 & 2 & shift, $>$100 feat. \\
    pancreatic\_cancer\_mouse\_detection & grouped & 6{,}772 & 121 & 2 & shift, $>$100 feat. \\
    parkinsons\_biomedical\_voice\_measurements & grouped & 24 & 130 & 2 & shift \\
    sepsis\_prediction\_1m & grouped & 43 & 970{,}175 & 2 & shift, $>$48k rows \\
    \addlinespace
    acquire\_valued\_shoppers\_challenge & temporal & 111 & 109{,}766 & 2 & shift, $>$100 feat., $>$48k rows \\
    anes\_voting\_2026 & temporal & 318 & 27{,}917 & 2 & shift, $>$100 feat. \\
    ghanas\_indigenous\_intel & temporal & 10 & 7{,}442 & 4 & shift \\
    home\_credit\_default\_stability\_1m & temporal & 711 & 1{,}000{,}000 & 2 & shift, $>$100 feat., $>$48k rows \\
    hotel\_booking\_demand & temporal & 31 & 52{,}183 & 2 & shift, $>$48k rows \\
    ieee\_fraud\_detection & temporal & 435 & 321{,}131 & 2 & shift, $>$100 feat., $>$48k rows \\
    kick & temporal & 32 & 51{,}118 & 2 & shift, $>$48k rows \\
    \addlinespace
    aps\_failure & iid & 170 & 50{,}667 & 2 & $>$100 feat., $>$48k rows \\
    biogeographical\_ancestry\_prediction & iid & 104 & 423 & 10 & $>$100 feat. \\
    bioresponse & iid & 1{,}776 & 2{,}501 & 2 & $>$100 feat. \\
    clock\_protein\_toxicity & iid & 1{,}117 & 114 & 2 & $>$100 feat. \\
    customer\_satisfaction\_in\_airline & iid & 21 & 86{,}587 & 2 & $>$48k rows \\
    give\_me\_some\_credit & iid & 10 & 100{,}000 & 2 & $>$48k rows \\
    hiva\_agnostic & iid & 1{,}518 & 2{,}563 & 2 & $>$100 feat. \\
    home\_credit\_default\_risk & iid & 504 & 205{,}005 & 2 & $>$100 feat., $>$48k rows \\
    homesite\_quote\_conversion & iid & 295 & 173{,}835 & 2 & $>$100 feat., $>$48k rows \\
    kdd\_cup\_09\_appetency & iid & 212 & 33{,}333 & 2 & $>$100 feat. \\
    ljubljana\_primary\_tumor & iid & 17 & 201 & 11 & $>$10 cls \\
    lung\_cancer & iid & 12{,}600 & 131 & 4 & $>$100 feat. \\
    lung\_cancer\_epithelial\_genexp & iid & 22{,}215 & 125 & 2 & $>$100 feat. \\
    mic & iid & 111 & 1{,}133 & 8 & $>$100 feat. \\
    porto\_seguro & iid & 37 & 396{,}804 & 2 & $>$48k rows \\
    prostate\_cancer\_detection & iid & 15{,}154 & 215 & 2 & $>$100 feat. \\
    santander\_customer\_satisfaction & iid & 307 & 47{,}387 & 2 & $>$100 feat. \\
    santander\_customer\_transaction\_prediction & iid & 600 & 133{,}333 & 2 & $>$100 feat., $>$48k rows \\
    sdss\_17 & iid & 11 & 52{,}035 & 3 & $>$48k rows \\
    sepsis\_survival\_minimal\_clinical\_records & iid & 3 & 73{,}469 & 2 & $>$48k rows \\
    \bottomrule
  \end{tabular}
\end{table}

\end{document}